\pdfminorversion=7

\documentclass{article}

\usepackage[accepted]{icml2025}

\usepackage[utf8]{inputenc} %

\usepackage{graphicx}

\usepackage[T1]{fontenc}

\IfFileExists{headers/config/showoverfull.config}{
	\overfullrule=1cm
}{
}

\usepackage{marginnote}

\usepackage[backgroundcolor=none,linecolor=red,textsize=footnotesize]{todonotes}

\usepackage{etoolbox}

\newbool{includeappendix}
\setbool{includeappendix}{true} %
\IfFileExists{headers/config/noappendix.config}{
	\setbool{includeappendix}{false}
}{}

\newif\ifincludeappendixx
\ifbool{includeappendix}{
	\includeappendixxtrue
}{
	\includeappendixxfalse
}

\usepackage{xr} %
\usepackage{filecontents}

\ifbool{includeappendix}{}{
	\input{appendix-labels-loader}

	\externaldocument{appendix-labels}
}

\newcommand{\eg}{e.g., }
\newcommand{\ie}{i.e., }

\definecolor{my-full-blue}{HTML}{1F77B4}

\definecolor{my-full-orange}{HTML}{FF7F0E}

\definecolor{my-full-green}{HTML}{2CA02C}

\definecolor{my-full-red}{HTML}{d62728}

\definecolor{my-full-purple}{HTML}{9467bd}

\definecolor{my-full-brown}{HTML}{8c564b}

\definecolor{my-full-pink}{HTML}{e377c2}

\definecolor{my-full-gray}{HTML}{7f7f7f}

\definecolor{my-full-olive}{HTML}{bcbd22}

\definecolor{my-full-cyan}{HTML}{17becf}

\definecolor{muted-green}{RGB}{87,132,89}
\definecolor{muted-red}{RGB}{218,90,84}

\colorlet{cgreen}{muted-green}
\colorlet{cred}{muted-red}

\colorlet{my-blue}{my-full-blue!30}
\colorlet{my-orange}{my-full-orange!30}
\colorlet{my-green}{my-full-green!30}
\colorlet{my-red}{my-full-red!30}
\colorlet{my-purple}{my-full-purple!30}
\colorlet{my-brown}{my-full-brown!30}
\colorlet{my-pink}{my-full-pink!30}
\colorlet{my-gray}{my-full-gray!30}
\colorlet{my-olive}{my-full-olive!30}
\colorlet{my-cyan}{my-full-cyan!30}

\usepackage{listings}

\usepackage{textcomp}

\usepackage{xcolor}

\usepackage[scaled=0.8]{beramono}

\definecolor{ckeyword}{HTML}{7F0055}
\definecolor{ccomment}{HTML}{3F7F5F}
\definecolor{cstring}{HTML}{2A0099}

\lstdefinestyle{numbers}{
	numbers=left,
	framexleftmargin=20pt,
	numberstyle=\tiny,
	firstnumber=auto,
	numbersep=1em,
	xleftmargin=2em
}

\lstdefinestyle{layout}{
	frame=none,
	captionpos=b,
}

\lstdefinestyle{comment-style}{
	morecomment=[l]//,
	morecomment=[s]{/*}{*/},
	commentstyle={\color{ccomment}\itshape},
}

\lstdefinestyle{string-style}{
	morestring=[b]",%
	morestring=[b]',%
	stringstyle={\color{cstring}},
	showstringspaces=false,%
}

\lstdefinestyle{keyword-style}{
	keywordstyle={\ttfamily\bfseries},
	morekeywords={
		function,
		constructor,
		int,
		bool,
		return,
		returns,
		uint
	},
	morekeywords = [2]{},
	keywordstyle = [2]{\text},
	sensitive=true,
}

\lstdefinestyle{input-encoding}{
	inputencoding=utf8,
	extendedchars=true,
	literate=
	{ℝ}{$\reals$}1%
	{→}{$\rightarrow$}1%
	{α}{$\alpha$}1%
	{β}{$\beta$}1%
	{λ}{$\lambda$}1%
	{θ}{$\theta$}1%
	{ϕ}{$\phi$}1%
}

\lstdefinestyle{escaping}{
	moredelim={**[is][\color{blue}]{\%}{\%}},
	escapechar=|,
	mathescape=true
}

\lstdefinestyle{default-style}{
	basicstyle=\fontencoding{T1}\ttfamily\footnotesize,
	style=numbers,
	style=layout,
	style=comment-style,
	style=string-style,
	style=keyword-style,
	style=input-encoding,
	style=escaping,
	tabsize=2,
	upquote=true
}

\lstdefinelanguage{BASIC}{
	language=C++,
	style=default-style
}[keywords,comments,strings]%

\lstset{language=BASIC}

\usepackage{amsmath,amsfonts,bm}

\def\ceil#1{\lceil #1 \rceil}

\def\1{\bm{1}}

\DeclareMathAlphabet{\mathsfit}{\encodingdefault}{\sfdefault}{m}{sl}
\SetMathAlphabet{\mathsfit}{bold}{\encodingdefault}{\sfdefault}{bx}{n}

\def\gA{{\mathcal{A}}}
\def\gB{{\mathcal{B}}}

\def\gN{{\mathcal{N}}}

\def\gX{{\mathcal{X}}}
\def\gY{{\mathcal{Y}}}

\def\sR{{\mathbb{R}}}

\newcommand{\E}{\mathbb{E}}
\newcommand{\prob}{\mathbb{P}}

\DeclareMathOperator*{\argmax}{arg\,max}
\DeclareMathOperator*{\argmin}{arg\,min}

\usepackage[hidelinks]{hyperref}
\usepackage{url}

\usepackage{adjustbox}
\usepackage[capitalise]{cleveref}
\usepackage{scrextend}

\usepackage{enumitem}
\usepackage{threeparttable}
\usepackage{multirow, makecell}
\usepackage{booktabs}
\usepackage{xspace}
\usepackage{wrapfig}
\usepackage{dsfont}

\usepackage{subcaption}
\usepackage{caption}
\captionsetup[figure]{font=footnotesize}

\usepackage{algorithm,algcompatible,amssymb,amsmath}

\algnewcommand\RETURN{\State \textbf{return} }
\usepackage{amsthm}
\usepackage{thmtools}
\usepackage{thm-restate}
\usepackage{mathtools}
\usepackage[makeroom]{cancel}

\declaretheoremstyle[
  spaceabove=0.5em, %
  spacebelow=0.01em,%
  headfont=\normalfont\bfseries,
  bodyfont=\normalfont,
  postheadspace=0.5em,
]{mystyle}

\declaretheoremstyle[
  spaceabove=0.5em, %
  spacebelow=0.01em,%
  headfont=\normalfont\itshape,
  bodyfont=\normalfont,
  postheadspace=0.5em,
  qed=$\square$,
]{myproofstyle}

\crefname{thm}{Theorem}{Theorems}

\crefname{lem}{Lemma}{Lemmas}

\crefname{cor}{Corollary}{Corollaries}

\crefname{defi}{Definition}{Definitions}

\usepackage{tikz}
\usetikzlibrary{calc,decorations,decorations.pathmorphing}
\usetikzlibrary{positioning,fit,arrows}
\usetikzlibrary{decorations.markings}
\usetikzlibrary{shapes,shapes.geometric}
\usetikzlibrary{shadows,patterns,snakes}
\usetikzlibrary{backgrounds,decorations.pathreplacing,automata}
\usetikzlibrary{intersections}
\usetikzlibrary{angles,quotes}
\usetikzlibrary{plotmarks}
\usetikzlibrary{patterns}
\usetikzlibrary{arrows.meta}
\usetikzlibrary{shapes.misc}
\usetikzlibrary{chains}
\usepackage{siunitx}
\newcolumntype{d}[1]{S[table-format=#1]}
\usetikzlibrary{patterns.meta}

\makeatletter
\def\extractcoord#1#2#3{
	\path let \p1=(#3) in \pgfextra{
		\pgfmathsetmacro#1{\x{1}/\pgf@xx}
		\pgfmathsetmacro#2{\y{1}/\pgf@yy}
		\xdef#1{#1} \xdef#2{#2}
	};
}
\makeatother

\usepackage{pifont}%

\newcommand{\cifar}{CIFAR-10\xspace}
\newcommand{\IN}{\textsc{ImageNet}\xspace}

\usepackage{comment}

\usepackage{pifont}

\makeatletter
\makeatother

\crefformat{section}{\S#2#1#3}

\crefrangeformat{section}{\S#3#1#4\crefrangeconjunction\S#5#2#6}

\crefmultiformat{section}{\S#2#1#3}{\crefpairconjunction\S#2#1#3}{\crefmiddleconjunction\S#2#1#3}{\creflastconjunction\S#2#1#3}

\newcommand{\crefrangeconjunction}{--}

\crefname{listing}{Lst.}{listings}
\crefname{line}{Lin.}{Lin.}
\crefname{appendix}{App.}{App.}
\crefname{figure}{Figure}{Figure}

\newcommand{\appref}[1]{%
	\ifbool{includeappendix}{\cref{#1}}{the appendix}%
}
\newcommand{\Appref}[1]{%
	\ifbool{includeappendix}{\cref{#1}}{The appendix}%
}

\icmltitlerunning{Average Certified Radius is a Poor Metric for Randomized Smoothing}

\begin{document}

\twocolumn[
\icmltitle{Average Certified Radius is a Poor Metric for Randomized Smoothing}

\icmlsetsymbol{equal}{*}

\begin{icmlauthorlist}
\icmlauthor{Chenhao Sun}{equal,eth}
\icmlauthor{Yuhao Mao}{equal,eth}
\icmlauthor{Mark Niklas Müller}{eth,logicstar}
\icmlauthor{Martin Vechev}{eth}
\end{icmlauthorlist}

\icmlaffiliation{eth}{Department of Computer Science, ETH Zurich, Switzerland}
\icmlaffiliation{logicstar}{LogicStar.ai}

\icmlcorrespondingauthor{}{chensun@student.ethz.ch, \{yuhao.mao, martin.vechev\}@inf.ethz.ch, mark@logicstar.ai}

\icmlkeywords{Machine Learning, ICML}

\vskip 0.3in
]

\printAffiliationsAndNotice{\icmlEqualContribution}

\begin{abstract}
	Randomized smoothing (RS) is popular for providing certified robustness guarantees against adversarial attacks. The average certified radius (ACR) has emerged as a widely used metric for tracking progress in RS. However, in this work, for the first time we show that ACR is a poor metric for evaluating robustness guarantees provided by RS. We theoretically prove not only that a trivial classifier can have arbitrarily large ACR, but also that ACR is extremely sensitive to improvements on easy samples. In addition, the comparison using ACR has a strong dependence on the certification budget. Empirically, we confirm that existing training strategies, though improving ACR, reduce the model's robustness on hard samples consistently. To strengthen our findings, we propose strategies, including explicitly discarding hard samples, reweighing the dataset with approximate certified radius, and extreme optimization for easy samples, to replicate the progress in RS training and even achieve the state-of-the-art ACR on \cifar, without training for robustness on the full data distribution. Overall, our results suggest that ACR has introduced a strong undesired bias to the field, and its application should be discontinued in RS. Finally, we suggest using the empirical distribution of $p_A$, the accuracy of the base model on noisy data, as an alternative metric for RS.
	
\end{abstract}

\section{Introduction}

Adversarial robustness, namely the ability of a model to resist arbitrary small perturbations to its input, is a critical property for deploying machine learning models in security-sensitive applications. Due to the incompleteness of adversarial attacks which try to construct a perturbation that manipulates the model \citep{AthalyeC018}, certified defenses have been proposed to provide robustness guarantees. While deterministic certified defenses \citep{GowalDSBQUAMK18,MirmanGV18,ShiWZYH21,MullerE0V23,MaoMFV23, mao2024ctbench, PalmaBDKSL23,  balauca2024overcoming} incur no additional cost at inference-time, randomized certified defenses scale better with probabilistic guarantees at the cost of multiplied inference-time complexity. The most popular randomized certified defense is Randomized Smoothing (RS) \citep{LecuyerAG0J19,CohenRK19}, which certifies a smoothed classifier given the accuracy of a base model on noisy data, $p_A$.

\begin{figure}
    \centering
    \includegraphics[width=.4\linewidth]{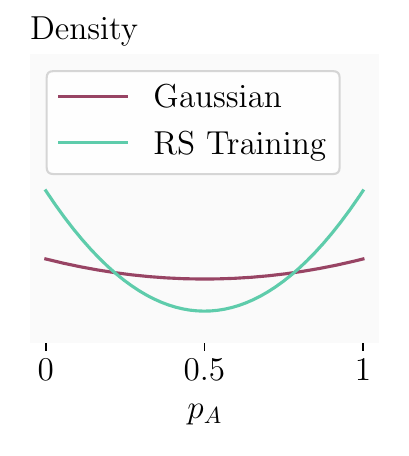}
    \vspace{-3mm}
    \caption{Conceptual illustration of the effect of RS training.}
    \label{fig:concept}
    \vspace{-3mm}
\end{figure}

To train better models with larger certified radius under RS, many training strategies have been proposed \citep{CohenRK19,SalmanLRZZBY19,JeongS20, ZhaiDHZGRH020, JeongKS23}. Average Certified Radius (ACR), defined to be the average of the certified radii over each sample in the dataset, has been a common metric to evaluate the effectiveness of these methods. Although some studies refrained from using ACR in their evaluation unintentionally \citep{carlini2022certified, xiao2022densepure}, the inherent limitation of ACR remains unknown. As a result, the community remains largely unaware of these issues, resulting in subsequent research relying on ACR continuously \citep{zhang2023diffsmooth, jeong2024multi}. However, in this work, we show that ACR is a poor metric for evaluating the true robustness of a given model under RS. 

\paragraph{Main Contributions} 
\begin{itemize}[leftmargin=5mm]
    \item We theoretically prove that with a large enough certification budget, ACR of a trivial classifier can be arbitrarily large, and that with the certification budget commonly used in practice, an improvement on easy inputs contributes much more to ACR than on hard inputs, more than 1000x in the extreme case (\cref{sec:acr_theoretical} and \cref{sec:acr_practical}).
    \item We empirically compare RS training strategies to Gaussian training and show that all current RS training strategies are actually reducing the accuracy on hard inputs where $p_A$ is relatively small, and only focus on easy inputs where $p_A$ is very close to 1 to increase ACR (\cref{sec:acr_empirical}). \cref{fig:concept} conceptually visualizes this effect.
    \item Based on these novel insights, we develop strategies to replicate the advances in ACR by encouraging the model to focus only on easy inputs. Specifically, we discard hard inputs during training, reweight the dataset with their contribution to ACR, and push $p_A$ extremely close to 1 for easy inputs via adversarial noise selection. With these simple modifications to Gaussian training, we achieve the new SOTA in ACR on \cifar and a competitive ACR on \IN without targeting robustness for the general case (\cref{sec:method} and \cref{sec:experiment}).
\end{itemize}
Overall, our work proves that the application of ACR should be discontinued in the evaluation of RS. In particular, we suggest using the empirical distribution of $p_A$ as more informative metrics,  and encourage the community to evaluate RS training more uniformly (\cref{sec:discuss}). We hope this work will motivate the community to permanently abandon ACR and inspire future research in RS.

\section{Related Work}

\begin{table*}[]
    \centering
    \begin{adjustbox}{width=.8\linewidth,center}
        \begin{tabular}{c|c|c|c|c}
            \hline 
            Literature & Use ACR  & Universal improvement & Claim SOTA  & Customized model \\ \hline
            \citet{ZhaiDHZGRH020}    & \ding{52} &&  \ding{52} & \ding{52} \\ \hline
            \citet{JeongS20}    & \ding{52} &&  \ding{52} & \ding{52} \\ \hline
            \citet{NEURIPS2020_837a7924} &    &  & & \ding{52} \\ \hline
            \citet{JeongPKLKS21}    & \ding{52} &&  \ding{52} & \ding{52} \\ \hline
            \citet{HorvathM0V22}    & \ding{52} &  & \ding{52} & \ding{52} \\ \hline
            \citet{chen2022iss}    & \ding{52} &  & \ding{52} & \ding{52} \\ \hline
            \citet{VaishnaviER22}    & \ding{52} &  &  & \ding{52} \\ \hline
            \citet{yang2022certified}  & \ding{52}  & \ding{52} & \ding{52} & \ding{52} \\ \hline
            \citet{carlini2022certified} &    &  & & \ding{52} \\ \hline
            \citet{JeongKS23}    & \ding{52} &&  \ding{52} & \ding{52} \\ \hline
            \citet{wudenoising} &    & \ding{52} & \ding{52} & \ding{52} \\ \hline
            \citet{li2024consistency} &    & \ding{52} & \ding{52} & \ding{52} \\ \hline
            \citet{wang2024drf}  & \ding{52}  &  & \ding{52} & \ding{52} \\ \hline
        \end{tabular}
    \end{adjustbox}
    \caption{A literature survey in Randomized Smoothing for publications on top conferences, ranging from 2020 to 2024. It only includes works that use the certification method proposed by \citet{CohenRK19} and propose general-purpose dataset-agnostic RS algorithms.}
    \label{tab:related}
    \vspace{-2mm}
\end{table*}

Randomized Smoothing (RS) is a defense against adversarial attacks that provides certified robustness guarantees \citep{LecuyerAG0J19,CohenRK19}. However, to achieve strong certified robustness, special training strategies tailored to RS are essential. Gaussian training, which adds Gaussian noise to the original input, is the most common strategy, as it naturally aligns with RS \citep{CohenRK19}. \citet{SalmanLRZZBY19} propose to add adversarial attacks to Gaussian training, and \citet{LiCWC19} propose a regularization to control the stability of the output. \citet{SalmanSYKK20} further shows that it is possible to exploit a pretrained non-robust classifier to achieve strong RS certified robustness with input denoising. Afterwards, Average Certified Radius (ACR), the average of RS certified radius over the dataset, is commonly used to evaluate RS training: \citet{ZhaiDHZGRH020} propose an attack-free mechanism that directly maximizes certified radii; \citet{JeongS20} propose a regularization to improve the prediction consistency; \citet{JeongPKLKS21} propose to calibrate the confidence of smoothed classifier; \citet{HorvathM0V22} propose to use ensembles as the base classifier to reduce output variance; \citet{VaishnaviER22} apply knowledge transfer on the base classifier; \citet{JeongKS23} distinguishes hard and easy inputs and apply different loss for each class. While these methods all improve ACR, this work shows that ACR is a poor metric for robustness, and that these training algorithms all introduce undesired side effects. This work is the first to question the development of RS training strategies evaluated with ACR and suggests new metrics as alternatives. While there exists various ways for RS certification \citep{cullen2022double,li2022double}, this work only focus on the certification algorithm proposed by \citet{CohenRK19}, which is the most widely used in the literature.

To better understand the application of ACR, we conduct a literature review in RS for publications on top conferences (ICML, ICLR, AAAI, NeurIPS), ranging from 2020 (since the first evaluation with ACR) to 2024 (until this work). We only include works that use the certification algorithm proposed by \citet{CohenRK19} and propose general-purpose dataset-agnostic RS algorithms. We identify 13 works with the specified criteria, and report for each of them (1) whether ACR is evaluated, (2) whether universal improvement in certified accuracy is achieved at all radii, (3) whether a customized base model is used (either parameter tuning or architecture design) and (4) whether SOTA achievement is claimed. The result is shown in \cref{tab:related}. Among them, we find 10 works that customize the model and claim SOTA, which is the focus of our study. 8 of them evaluate with ACR, and 7 of them claim SOTA solely based on ACR, i.e., claim SOTA without universal improvement in certified accuracy at various radii. Therefore, we conclude that ACR is of great importance to the field, and the practice of claiming SOTA based on ACR is widely spread.

\section{Background}

In this section, we briefly introduce the background required for this work.

\textbf{Adversarial Robustness} is the ability of a model to resist arbitrary small perturbations to its input. Formally, given an input set $S(x)$ and a model $f$, $f$ is adversarially robust within $S(x)$ iff for every $x_1, x_2 \in S(x)$, $f(x_1)=f(x_2)$. In this work, we focus on the $L_2$ neighborhood of an input, \ie $S(x) := B_\epsilon(x) = \{x' \mid \|x-x'\|_2 \leq \epsilon\}$ for a given $\epsilon\ge 0$. For a given $(x,y)$ from the dataset $(\gX, \gY)$, $f$ is robustly correct iff $\forall x^\prime \in S(x)$, $f(x^\prime)=y$.

\textbf{Randomized Smoothing} constructs a smooth classifier $\hat{f}(x)$ given a base classifier $f$, defined as follows: $\hat{f}(x):= \argmax_{c \in \gY} \prob_{\delta \sim \gN(0, \sigma^2 I)}(f(x+\delta)=c)$. Intuitively, the smooth classifier assigns the label with maximum probability in the neighborhood of the input. With this formulation, \citet{CohenRK19} proves that $\hat{f}$ is adversarially robust within $B_{R(x, p_A)}(x)$ when $p_A \ge 0.5$, where $R(x, p_A) := \sigma \Phi^{-1}(p_A)$, $\Phi$ is the cumulative distribution function of the standard Gaussian distribution $\gN(0,1)$, and $p_A:=\max_{c \in \gY} \prob_{\delta \sim \gN(0, \sigma^2 I)}(f(x+\delta)=c)$ is the probability of the most likely class. Average Certified Radius (ACR) is defined as the average of $R(x, p_A) \mathbf{1}(\hat{f}(x)=y)$ over the dataset, where $\mathbf{1}(\cdot)$ is the indicator function. In practice, $p_A$ cannot be computed exactly, and an estimation $\hat{p}_A$ such that $\prob(p_A \ge \hat{p}_A) \ge 1-\alpha$ is substituted, where $\alpha$ is the confidence threshold. Thus, $\hat{p}_A$ is computed based on $N$ trials for the event $\mathbf{1}(f(x+\delta)=c)$. We call $N$ the certification budget, which is the number of queries to the base classifier $f$ to estimate $p_A$. Since RS certifies robustness based on the accuracy of the base model on samples perturbed by Gaussian noise, \emph{Gaussian training}, which augments the train data with Gaussian noise, is the most intuitive method to train the base model for RS. Specifically, it optimizes 
\[ 
\argmin_\theta \E_{(x,y)\sim (\gX, \gY)} \frac{1}{m} \sum_{i=1}^m L(x+\delta_i; y)
\]
where $\delta_i$ are sampled from $\gN(0, \sigma^2 I)$ uniformly at random, $m$ is the number of samples and $L$ is the cross entropy loss. Due to space constraints, we omit the details of other training strategies for RS, and refer interested readers to \citet{CohenRK19,SalmanLRZZBY19,JeongS20, ZhaiDHZGRH020, JeongKS23}.

\section{Weakness of ACR and the Consequence} \label{sec:theory}

We now first theoretically show that ACR can be arbitrarily large for a trivial classifier, assuming enough budget for certification (\cref{sec:acr_theoretical}). We then demonstrate that with a realistic certification budget, an improvement on easy samples could contribute more than 1000 times to ACR than the same magnitude of improvement on hard samples (\cref{sec:acr_practical}). Finally, we empirically show that due to the above weakness of ACR, all current RS training strategies evaluated with ACR reduce the accuracy on hard samples and only focus on easy samples (\cref{sec:acr_empirical}), improving ACR at the cost of performance on hard samples.

\subsection{Trivial Classifier with Infinite ACR} \label{sec:acr_theoretical}

In \cref{thm:trivial_acr} below we show that, for every classification problem, there exists a trivial classifier with infinite ACR given enough budget for certification, while such a classifier can only robustly classify samples from the most likely class in the dataset and always misclassifies samples from other classes.

\begin{restatable}{thm}{trivialacr} \label{thm:trivial_acr}
    For every $M>0$ and $\alpha>0$, there exists a trivial classifier $f$ which always predicts the same class with a certification budget $N>0$, such that the ACR of $f$ is greater than $M$ with confidence at least $1-\alpha$.
\end{restatable}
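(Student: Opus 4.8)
The plan is to take $f$ to be the constant classifier $f\equiv c^\star$, where $c^\star$ is a most frequent label in the data, and then observe that Cohen's certification procedure hands the $c^\star$-labelled samples a certified radius that diverges as the budget $N\to\infty$, so that this fixed positive fraction of the data already drags the average past $M$.

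First I would fix $c^\star\in\argmax_{c\in\gY}\prob_{(x,y)}(y=c)$ and set $q:=\prob_{(x,y)}(y=c^\star)$; since some class must carry at least a $1/|\gY|$ fraction of the data, $q>0$ is a positive constant of the problem. For $f\equiv c^\star$ the smoothed classifier $\hat f$ is also the constant $c^\star$, and for every $x$ we have $f(x+\delta)=c^\star$ almost surely over $\delta\sim\gN(0,\sigma^2 I)$, so the exact $p_A$ equals $1$.

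Next I would trace the certification procedure of \citet{CohenRK19} on this $f$. On any sample $(x,c^\star)$, all $N$ noisy queries return $c^\star$, so the success count is deterministically $N$, and the binomial lower confidence bound used in the procedure — the $\alpha$-quantile of $\mathrm{Beta}(N,1)$, whose CDF is $t\mapsto t^{N}$ — evaluates to $\hat p_A=\alpha^{1/N}$. Hence, once $N$ is large enough that $\alpha^{1/N}\ge 1/2$, such a sample is certified with radius $\sigma\,\Phi^{-1}(\alpha^{1/N})$, while every sample with $y\ne c^\star$ is misclassified by $\hat f$ and contributes $0$. Crucially, nothing here depends on the random draws, so the ACR of $f$ is the deterministic number $q\,\sigma\,\Phi^{-1}(\alpha^{1/N})$; consequently any lower bound on the ACR holds with probability $1\ge 1-\alpha$, which settles the confidence clause (and sidesteps the usual union-bound loss across the dataset).

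Finally I would pick the budget. We may assume $\alpha\in(0,1)$, as otherwise $1-\alpha\le 0$ and the claim is vacuous; then $\alpha^{1/N}\uparrow 1$ as $N\to\infty$ and $\Phi^{-1}(t)\to+\infty$ as $t\to 1^-$, so it suffices to take any integer $N>\ln\alpha/\ln\Phi\!\bigl(M/(q\sigma)\bigr)$ — both logarithms are negative, so this threshold is a finite positive real, and for such $N$ we get $\alpha^{1/N}>\Phi\!\bigl(M/(q\sigma)\bigr)$, i.e. $q\,\sigma\,\Phi^{-1}(\alpha^{1/N})>M$. The only step needing genuine care is the middle one: confirming that the lower confidence bound is exactly $\alpha^{1/N}$ when all $N$ queries agree, and checking the side conditions ($q>0$, the most-likely class is non-empty, $\alpha^{1/N}\ge 1/2$ for large $N$) that make $\sigma\,\Phi^{-1}(\alpha^{1/N})$ a legitimate certified radius; everything else is a one-line limit.
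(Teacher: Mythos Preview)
Your proposal is correct and follows essentially the same approach as the paper's proof: take $f\equiv c^\star$ for a most frequent class, note that every $c^\star$-labelled sample gets certified radius $\sigma\,\Phi^{-1}(\alpha^{1/N})$, lower-bound the ACR by the fraction of such samples times this radius, and choose $N$ large enough. Your version is in fact slightly more careful---you explicitly justify the value $\hat p_A=\alpha^{1/N}$ via the $\mathrm{Beta}(N,1)$ quantile, observe that the outcome is deterministic so the confidence clause is trivial, and flag the side condition $\alpha^{1/N}\ge 1/2$---whereas the paper simply asserts $R(x)=\sigma\,\Phi^{-1}(\alpha^{1/N})$ and bounds the fraction by $1/K$ via pigeonhole.
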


The proof of \cref{thm:trivial_acr} is deferred to \cref{app:proof_trivial_acr}.
Despite an intuitive proof, \cref{thm:trivial_acr} reveals an important fact: ACR can be arbitrarily large for a classifier with large enough certification budget. Further, a trivial classifier can achieve infinite ACR with minimal robustness on all but one classes. Therefore, the ACR metric presented in the literature is heavily associated with the certification budget; when the certification budget changes, the order of ACR between models may change as well, since inputs with lower $p_A$ require a larger budget to certify. For example, given a perfectly balanced binary dataset and $N=50$, $\alpha=0.001$, $\sigma=1$, a trivial classifier can achieve ACR equals $0.565$, while another classifier with $p_A=0.9$ on every input can only achieve ACR equals $0.544$, making the trivial classifier the better one. However, when $N$ is increased to $100$, the latter achieves ACR equals $0.756$, surpassing the trivial classifier which has ACR equals $0.750$. Interestingly, when we further increase $N$ to $200$, the trivial classifier achieves ACR equals $0.913$, surpassing the latter which has ACR equals $0.909$, thus the order is reversed again. 
 This toy example showcases the strong dependence of the evaluation results on the certification budget, especially the comparative order of different models. This is problematic in practice, as the budget usually differs depending on the task and the model, resulting in uncertainty when choosing training strategies evaluated based on ACR.

\subsection{ACR Prefers Improving Easy Samples} \label{sec:acr_practical}

\begin{figure}
    \centering
    \begin{subfigure}[t]{.4\linewidth}
        \centering
        \includegraphics[width=\linewidth]{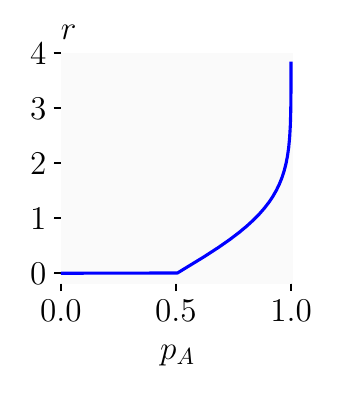}
        \vspace{-5mm}
        \caption{}
        \label{subfig:radius}
    \end{subfigure}
    \begin{subfigure}[t]{.43\linewidth}
        \centering
        \includegraphics[width=\linewidth]{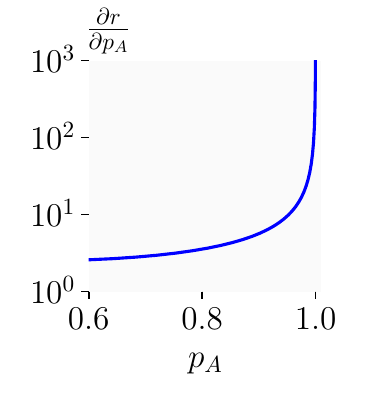}

        \caption{}
        \label{subfig:radius_change}
    \end{subfigure}
    \vspace{-2mm}
    \caption{Certified radius $r$ and its sensitivity $\frac{\partial r}{\partial p_A}$ against $p_A$. Note the log scale of y axis in \cref{subfig:radius_change}. $N$ is set to $10^5$, $\alpha$ is set to $0.001$, and $\sigma$ is set to 1.}
    \label{fig:certified_radius}
    \vspace{-3mm}
\end{figure}

We now discuss the effect of ACR with a constant budget and fixed $\alpha$. We follow the standard certification setting in the literature, setting $N=10^5$ and $\alpha=0.001$ \citep{CohenRK19}. With this budget, the maximum certified radius for a data point is $R(x, p_A=1) = \sigma \Phi^{-1}(\alpha^{1/N}) \approx 3.8\sigma$. However, we will show that $\frac{\partial R(x, p_A)}{\partial p_A}$ grows extremely fast, exceeding $1000 \sigma$ when $p_A \rightarrow 1$ and close to $0$ when $p_A \rightarrow 0.5$.

Without loss of generality, we set $\sigma=1$ and denote $R(x, p_A)$ as $r$. \cref{fig:certified_radius} shows $r$ and $\frac{\partial r}{\partial p_A}$ against $p_A$. While $r$ remains bounded by a constant $3.8$, $\frac{\partial r}{\partial p_A}$ grows extremely fast when $p_A$ approaches 1. As a result, increasing $p_A$ from $0.99$ to $0.999$ improves $r$ from $2.3$ to $3.0$, matching the improvement achieved by increasing $p_A$ from $0$ to $0.76$.  Further, when $p_A <0.5$, the data point will not contribute to ACR at all, thus optimizing ACR will not increase $p_A$ with a local optimization algorithm like gradient descent. Therefore, it is natural for RS training to disregard inputs with $p_A < 0.5$, as their ultimate goal is to improve ACR.  

We use the same toy example as in \cref{sec:acr_theoretical} to further demonstrate this problem. If we further increase $N$ beyond $200$, we can find, surprisingly, that the classifier with $p_A=0.9$ for every input always has a smaller ACR than the trivial classifier. This means that achieving $p_A=1$ on only half of the dataset and getting $p_A=0$ for the rest is ``more robust'' than achieving $p_A=0.9$ on every input when evaluated with ACR, which is misaligned with our intuition. This toy example highlights again that the unmatched contribution to ACR from easy and hard samples leads to biases.

It is important to note that, while improvements in $p_A$ for easy samples contribute disproportionately more to the overall ACR compared to improvements on hard samples, this does not imply that increasing $p_A$ for easy samples itself is easier. However, we hypothesize that focus on improving $p_A$ on easy samples can increase ACR. As we will show in \cref{sec:method} and \cref{sec:experiment}, this hypothesis holds true empirically.

\begin{figure*}
    \centering
    \begin{subfigure}[t]{.33\linewidth}
        \centering
        \includegraphics[width=.9\linewidth]{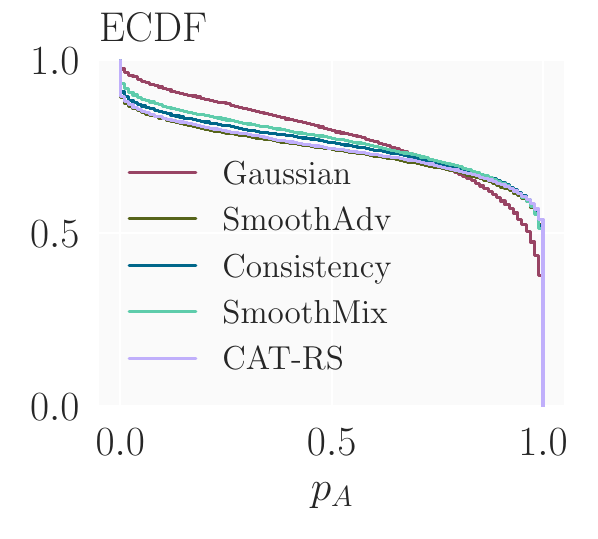}
        \vspace{-2mm}
        \caption{$\sigma=0.25$}
        \label{subfig:p_corr_0.25}
    \end{subfigure}
    \begin{subfigure}[t]{.33\linewidth}
        \centering
        \includegraphics[width=.9\linewidth]{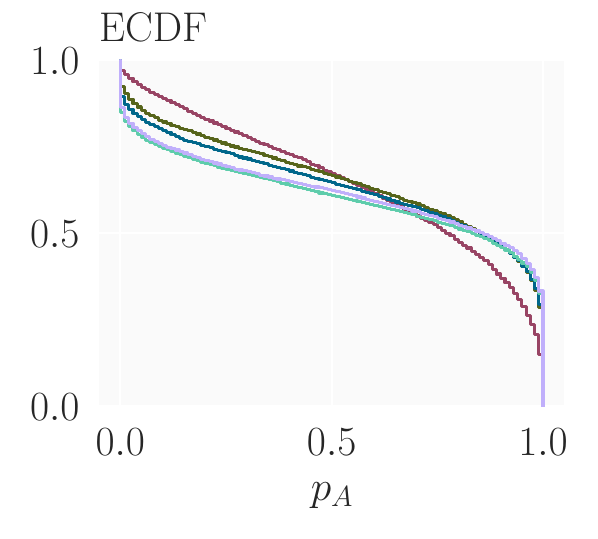}
        \vspace{-2mm}
        \caption{$\sigma=0.5$}
        \label{subfig:p_corr_0.5}
    \end{subfigure}
    \begin{subfigure}[t]{.33\linewidth}
        \centering
        \includegraphics[width=.9\linewidth]{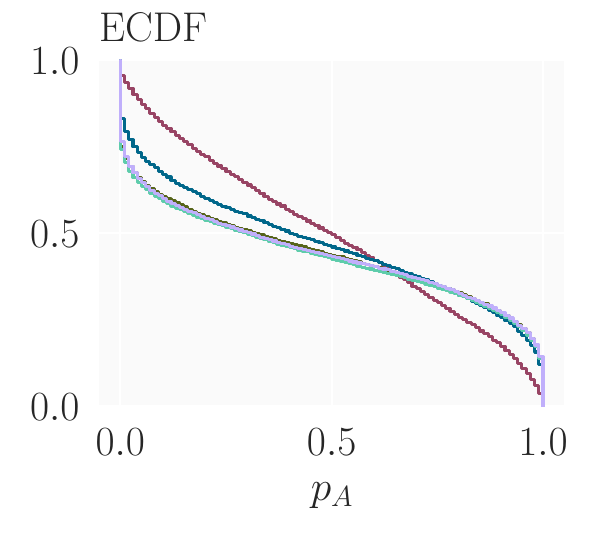}
        \vspace{-2mm}
        \caption{$\sigma=1.0$}
        \label{subfig:p_corr_1.0}
    \end{subfigure}
    \vspace{-2mm}
    \caption{The empirical cumulative distribution (ECDF) of $p_A$  on \cifar for models trained and certified with various $\sigma$ with different training algorithms.}
    \label{fig:p_corr}
    \vspace{-1mm}
\end{figure*}

\subsection{Selection Bias in Existing RS Training Algorithms} \label{sec:acr_empirical}

\begin{table}[]
    \centering
    \begin{adjustbox}{width=.8\linewidth,center}
        \begin{tabular}{c|c|ccc}
            \toprule
            Method      & ACR   & easy  & hard  & easy / hard \\  \midrule
            Gaussian    & 0.56 & 10.10 & 22.67 & 0.45        \\
            SmoothAdv   & 0.68 & 5.60  & 5.62  & 1.00        \\
            Consistency & 0.72 & 14.99 & 19.32 & 0.78        \\
            SmoothMix   & 0.74 & 11.72 & 11.79 & 0.99        \\
            CAT-RS      & 0.76 & 30.45 & 7.12  & 4.28        \\
            \bottomrule
        \end{tabular}
    \end{adjustbox}

    \caption{The average model parameters gradient $l_2$ norm of easy ($p_A>0.5$) and hard ($p_A<0.5$) samples for models trained with different algorithms and $\sigma=0.5$, along with their relative magnitude (\emph{easy / hard}). The corresponding ACR is provided for reference.}
    \label{tab:grad_norm}
\end{table} 

We have shown that ACR strongly prefers easy samples in \cref{sec:acr_practical}. However, 
since ACR is not differentiable with respect to the model parameters because it is based on counting, RS training strategies usually do not directly apply ACR as the training loss. Instead, they optimize various surrogate objectives, and finally evaluate the model with ACR. Thus, it is unclear whether and to what extent the design of training algorithms is affected by the ACR metric. We now empirically quantify the effect, confirming that the theoretical weakness of ACR has introduced strong selection biases in existing RS training algorithms. Specifically, we show that SOTA training strategies reduce $p_A$ of hard samples and put more weight (measured by gradient magnitude) on easy samples when compared to Gaussian training.

\cref{fig:p_corr} shows the empirical cumulative distribution of $p_A$ for models trained with SOTA algorithms and Gaussian training on \cifar. Clearly, for various $\sigma$, SOTA algorithms have higher density than Gaussian training at $p_A$ close to zero and one. While they gain more ACR due to the improvement on easy samples, hard samples are consistently underrepresented in the final model compared to Gaussian training. The same results hold on \IN (\cref{fig:p_corr_in} in \cref{app:ecdf_in}). As a result, Gaussian training has higher $\prob(p_A\ge 0.5)$ (clean accuracy), and SOTA algorithms exceed Gaussian training only when $p_A$ passes a certain threshold, \ie when the certified radius is relatively large. This is problematic in practice, indicating that ACR does not properly measure the model's ability even with a fixed certification setting. For example, a face recognition model could have a high ACR but consistently refuse to learn some difficult faces, which is not acceptable in real-world applications.

To further quantify the relative weight between easy and hard samples indicated by each training algorithm, we measure the average gradient $l_2$ norm of easy and hard samples for model parameters trained with different algorithms and $\sigma=0.5$, as a proxy for the sample weight. Intuitively, samples with larger gradients on parameters contribute more to training and thus are more important to the final model. As shown in \cref{tab:grad_norm}, Gaussian training puts less weight on easy samples than hard samples, which is natural as easy samples have smaller loss values. However, SOTA algorithms put more weight on easy samples compared to Gaussian training, \eg the relative weight between easy and hard samples is 4.28 for the best ACR baseline, CAT-RS \citep{JeongKS23}, while for Gaussian training it is 0.45. This confirms that SOTA algorithms indeed prioritize easy samples over hard samples, consistent with the theoretical analysis in \cref{sec:acr_practical}.

\section{Replicating the Progress in ACR} \label{sec:method}

In \cref{sec:theory}, we find that ACR strongly prefers easy samples, and this bias has been introduced to the design of RS training strategies. This questions that whether explicitly focusing on easy data during training can effectively replicate the increase in ACR gained by years spent on the development of RS training strategies. In this section, we propose three intuitive modifications on the Gaussian training, answering this question affirmatively. 

\subsection{Discard Hard Data During Training} \label{sec:method_discard}

Samples with low $p_A$ contribute little to ACR, especially those with $p_A < 0.5$ which have no contribution at all (\cref{sec:acr_practical}). Further, as shown in \cref{fig:p_corr}, 20\%-50\% of data points has $p_A < 0.5$ after training converges on \cifar. Therefore, we propose to discard hard samples directly during training, so that they explicitly have diminished effect on the final training convergence. Specifically, given  a warm-up epoch $E_t$ and a confidence threshold $p_t$, we discard all data samples with $p_A < p_t$ at epoch $E_t$. We fix the number of steps taken by gradient descent and re-iterate on the distilled dataset when necessary to minimize the difference in training budget. After the discard, Gaussian training also ignores hard inputs, similar to SOTA algorithms.

\subsection{Data Reweighting with Certified Radius} \label{sec:method_weight}

ACR relates non-linearly to $p_A$, and the growth of the certified radius is much faster when improving easy samples. We account for this by reweighing the data points based on their certified radius. Specifically, we use the approximate (normalized) certified radius as the weight of the data point, realized by controlling the sampling frequency of each data point. The weight $w$ of every data point $x$ is defined as:
\begin{align*} \label{eq:weight}
        \hat{p}_A &= \textsc{LowerConfBound}(C, N, 1 - \alpha) \\
        w &= \max(1, \Phi^{-1}(\hat{p}_A) / \Phi^{-1}(p_{\rm min})),
\end{align*}
where $C$ is the count of correctly classified noisy samples and $p_{\rm min}$ is the reference probability threshold. Note that the estimation of $\hat{p}_A$ aligns with the certification. We clip the weight at 1 to avoid zero weights. To minimize computational overhead, we evaluate $\hat{p}_A$ every 10 epoch with $N=16$ and $\alpha=0.1$ throughout the paper. We set $p_{\rm min}=0.75$ because this is the minimum probability that has positive certified radius under this setting. The sampling weight curve is visualized in \cref{fig:dataset_weight}. After the reweight, ACR has an approximately linear connection to $p_A$ when $w>1$, and easy samples are sampled more frequently than hard samples so that Gaussian training also improves $p_A$ further for easy samples.

\subsection{Adaptive Attack on the Sphere} \label{sec:method_adv}

\begin{figure}[]
    \centering
    \includegraphics[width=.5\linewidth]{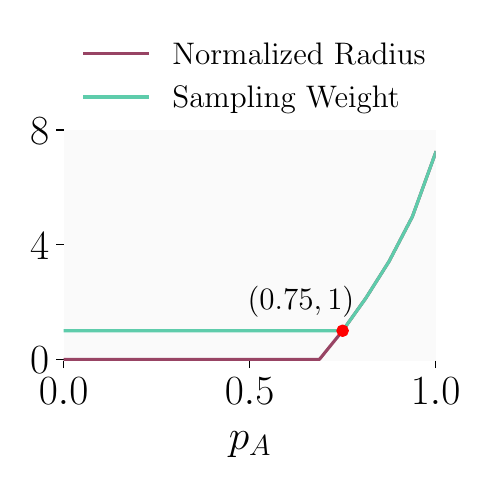}
    \vspace{-3mm}
    \caption{Certified radius (divided by the value at $p_A=0.75$) and the sampling weight of the data against $p_A$.}
    \label{fig:dataset_weight}
\end{figure}

\begin{figure}[t]
    \centering
    \begin{subfigure}[t]{.9\linewidth}
        \centering
        \includegraphics[width=.8\linewidth]{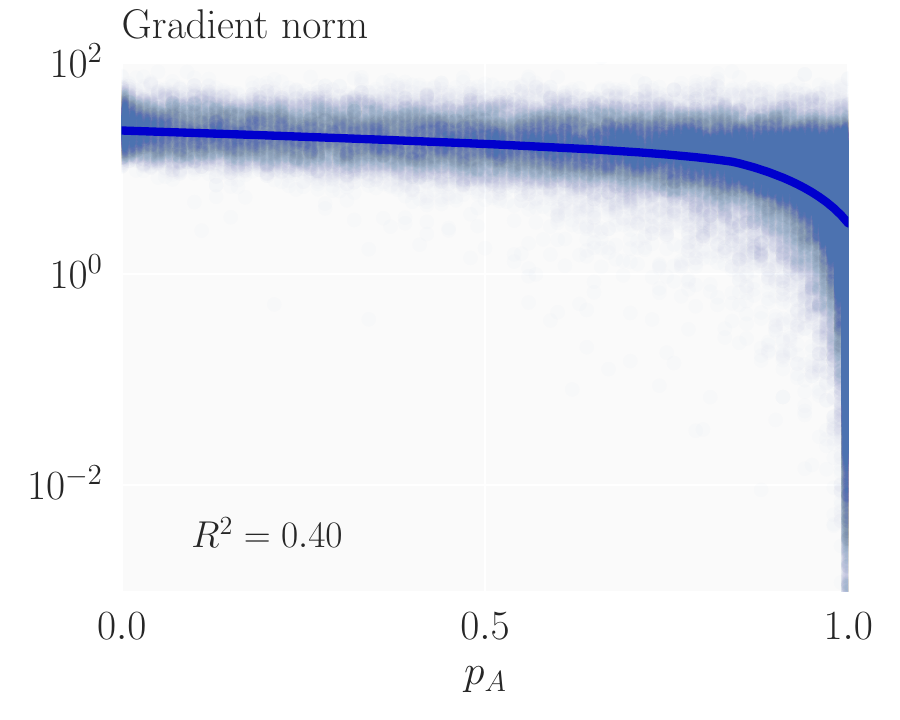}
        \vspace{-2mm}
        \caption{Before adaptive attack}
        \label{subfig:grad_norm_gaussian}
    \end{subfigure}
    \begin{subfigure}[t]{.9\linewidth}
        \centering
        \includegraphics[width=.8\linewidth]{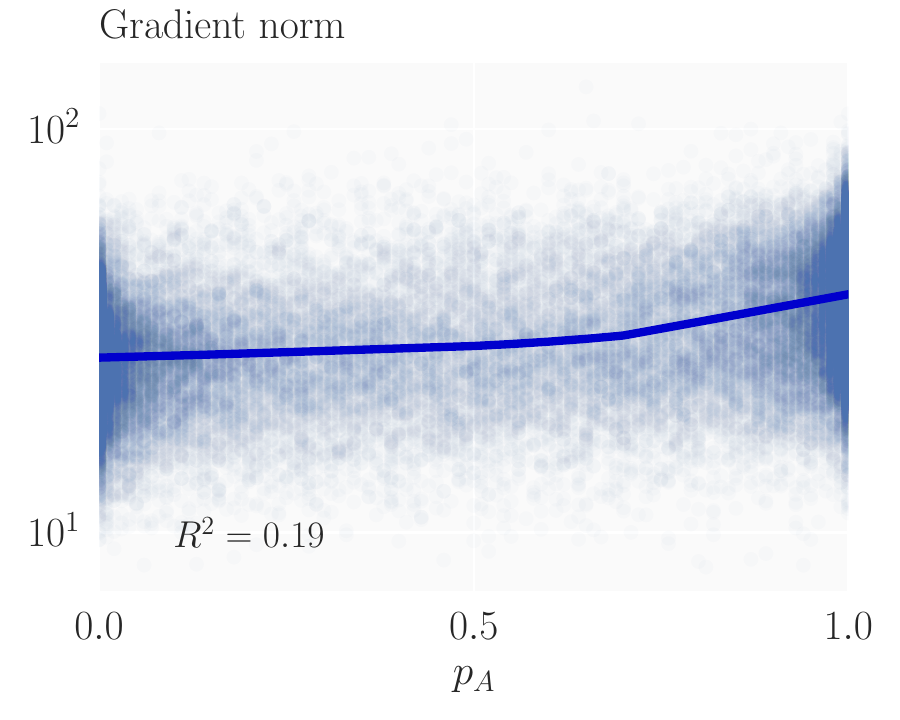}
        \vspace{-2mm}
        \caption{After adaptive attack}
        \label{subfig:grad_norm_adv}
    \end{subfigure}
    \vspace{-2mm}
    \caption{Comparison of the gradient norm distributions for different $p_A$ before and after the adaptive attack, $\sigma=0.5$. Note the log scale of y axis.}
    \label{fig:grad_norm_our}
    \vspace{-5mm}
\end{figure}

\begin{algorithm}[t]
    \caption{Adaptive Attack}
    \label{alg:adv}
    \begin{algorithmic}[0]
        \STATE \textbf{function} \textsc{AdaptiveAdv}($f$, $x$, $c$, $\bm{\delta}$, $T$, $\epsilon$)
        \STATE $\bm{\delta}^{*} \leftarrow \bm{\delta}$
        \FOR{$t=1$ to $T$}
            \IF{$f(x + \bm{\delta}^{*}) \neq c$}
                \STATE break
            \ENDIF
            \STATE $\bm{\delta}^{*} \leftarrow$  one step PGD attack on $\bm{\delta}^{*}$ with step size $\epsilon$
            \STATE $\bm{\delta}^{*} \leftarrow \|\bm{\delta}\|_2\cdot \bm{\delta}^{*} / \|\bm{\delta}^{*}\|_2$
        \ENDFOR
        \RETURN $\bm{\delta}^{*}$
        \STATE \textbf{end function}
    \end{algorithmic}
\end{algorithm}

SOTA algorithms re-balance the gradient norm of easy and hard samples in contrast to Gaussian training (\cref{tab:grad_norm}). In addition, when $p_A$ is close to 1, Gaussian training can hardly find a useful noise sample to improve $p_A$ further. To tackle this issue, we propose to apply adaptive attack on the noise samples to balance samples with different $p_A$. Specifically, we use Projected Gradient Descent (PGD) \citep{MadryMSTV18} to find the nearest noise to the Gaussian noise which can make the base classifier misclassify. Formally, we construct
\begin{equation*}
    \bm{\delta}^* = \argmin_{f(x+\bm{\delta}) \neq c,  \|\bm{\delta}\|_2 = \|\bm{\delta}_0\|_2} \|\bm{\delta} - \bm{\delta}_0 \|_2,
\end{equation*}
where $\bm{\delta}_0$ is a random Gaussian noise sample. Note that when $x+\bm{\delta}_0$ makes the base classifier misclassify, we have $\bm{\delta}^*=\bm{\delta}_0$, thus hard inputs are not affected by the adaptive attack. In addition, we remark that we do not constrain $\bm{\delta}^*$ to be in the neighborhood of $\bm{\delta}_0$ which is adopted by CAT-RS \citep{JeongKS23}; instead, we only maintain the $l_2$ norm of the noise, thus allowing the attack to explore a much larger space. This is because for every $\bm{\delta}^*$ such that $\|\bm{\delta}^*\|_2 = \|\bm{\delta}_0\|_2$, the value of the probability density function at $\bm{\delta}^*$ is the same as $\bm{\delta}_0$. We formalize this fact in \cref{thm:l2_norm_probability}.

\begin{restatable}{thm}{normprobability} \label{thm:l2_norm_probability}
    Assume $\bm{\delta}_1, \bm{\delta}_2 \in \sR^d$ and $\bm{\delta}_1 \ne \bm{\delta}_2$. If $\|\bm{\delta}_1\|_2 = \|\bm{\delta}_2\|_2$, then $p_{\gN(0, \sigma^2 I_d)}(\bm{\delta}_1) = p_{\gN(0, \sigma^2 I_d)}(\bm{\delta}_2)$ for every $\sigma > 0$, where $p_{\gN(0, \sigma^2 I_d)}$ is the probability density function of the isometric Gaussian distribution with mean 0 and covariance $\sigma^2 I_d$.
\end{restatable}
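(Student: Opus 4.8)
The plan is to write out the density of the isotropic Gaussian explicitly and observe that it depends on the argument only through its $L_2$ norm. Recall that for $\gN(0,\sigma^2 I_d)$ the probability density function is
\[
p_{\gN(0,\sigma^2 I_d)}(\bm{\delta}) = \frac{1}{(2\pi\sigma^2)^{d/2}} \exp\!\left(-\frac{\|\bm{\delta}\|_2^2}{2\sigma^2}\right).
\]
First I would derive this formula from independence of the coordinates: each coordinate of $\bm{\delta}$ is distributed as $\gN(0,\sigma^2)$, so the joint density factorizes as $\prod_{i=1}^d (2\pi\sigma^2)^{-1/2}\exp(-\delta_i^2/(2\sigma^2))$, and then I would combine the exponentials using $\sum_{i=1}^d \delta_i^2 = \|\bm{\delta}\|_2^2$ to reach the displayed expression. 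Note $\sigma>0$ is used exactly here, to guarantee the density is well defined and the prefactor finite.

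Next, given $\bm{\delta}_1,\bm{\delta}_2\in\sR^d$ with $\|\bm{\delta}_1\|_2=\|\bm{\delta}_2\|_2$, I would substitute both vectors into the formula above. The prefactor $(2\pi\sigma^2)^{-d/2}$ is the same in both cases, and the exponent $-\|\bm{\delta}\|_2^2/(2\sigma^2)$ evaluates to the same real number for $\bm{\delta}_1$ and $\bm{\delta}_2$ precisely because it is a function of the input only through its squared $L_2$ norm. Hence $p_{\gN(0,\sigma^2 I_d)}(\bm{\delta}_1)=p_{\gN(0,\sigma^2 I_d)}(\bm{\delta}_2)$. The hypothesis $\bm{\delta}_1\ne\bm{\delta}_2$ is never used in the argument; it appears in the statement only because the application in \cref{sec:method_adv} wants to emphasize that the adaptive attack genuinely relocates the noise sample while preserving its density.

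There is essentially no obstacle: the entire content is recalling the closed form of the multivariate Gaussian density and noting its spherical symmetry. If a more structural phrasing were desired, one could instead observe that for any orthogonal $Q$ one has $\|Q\bm{\delta}\|_2=\|\bm{\delta}\|_2$, so $p_{\gN(0,\sigma^2 I_d)}$ is invariant under the orthogonal group, which acts transitively on each sphere $\{\bm{\delta}:\|\bm{\delta}\|_2=r\}$, forcing the density to be constant on such spheres; but the direct computation is shorter and fully self-contained, so that is the route I would take.
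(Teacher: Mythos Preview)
Your proposal is correct and follows essentially the same route as the paper: write out the Gaussian density, collapse the sum of squares into $\|\bm{\delta}\|_2^2$, and conclude that the density depends on its argument only through this norm. Your additional remarks about the unused hypothesis $\bm{\delta}_1\ne\bm{\delta}_2$ and the orthogonal-invariance alternative are accurate observations not present in the paper, but the core argument is identical.
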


The proof of \cref{thm:l2_norm_probability} is deferred to \cref{app:proof_l2_norm_probability}.
\cref{fig:grad_norm_our} visualizes the gradient norm distributions for different $p_A$, before and after the adaptive attack. We observe that the adaptive attack balances the gradient norm of easy and hard samples. Before the attack, the gradient norm of easy samples is much smaller than that of hard samples, while after the attack, the gradient norm of easy samples is amplified without interfering the gradient norm of hard samples. Therefore, with the adaptive attack, Gaussian training obtains a similar gradient norm distribution to SOTA algorithms, and it can find effective noise samples more efficiently. Pseudocode of the adaptive attack is shown in \cref{alg:adv}, and more detailed description is provided in \cref{app:attack}.

\subsection{Overall Training Procedure} \label{sec:method_overall}

We now describe how the above three modifications are combined. At the beginning, we train the model with the Gaussian training \citep{CohenRK19}, which samples $m$ noisy points from the isometric Gaussian distribution uniformly at random and uses the average loss of noisy inputs as the training loss. When we reach the pre-defined warm-up epoch $E_t$, all data points with $p_A < p_t$ are discarded, and the distilled dataset is used thereafter, as described in \cref{sec:method_discard}. After this, we apply dataset reweight and the adaptive attack to training. Specifically, every 10 epoch after $E_t$ (including $E_t$), we evaluate the model with the procedure described in \cref{sec:method_weight} and assign the resulting sampling weight to each sample in the train set. In addition, we use the adaptive attack described in \cref{sec:method_adv} to generate the noises used for training. The pseudocode is shown in \cref{alg:train} in \cref{app:training}.

\begin{figure*}[]
    \centering
    \begin{subfigure}[t]{.31\linewidth}
        \centering
        \includegraphics[width=.9\linewidth]{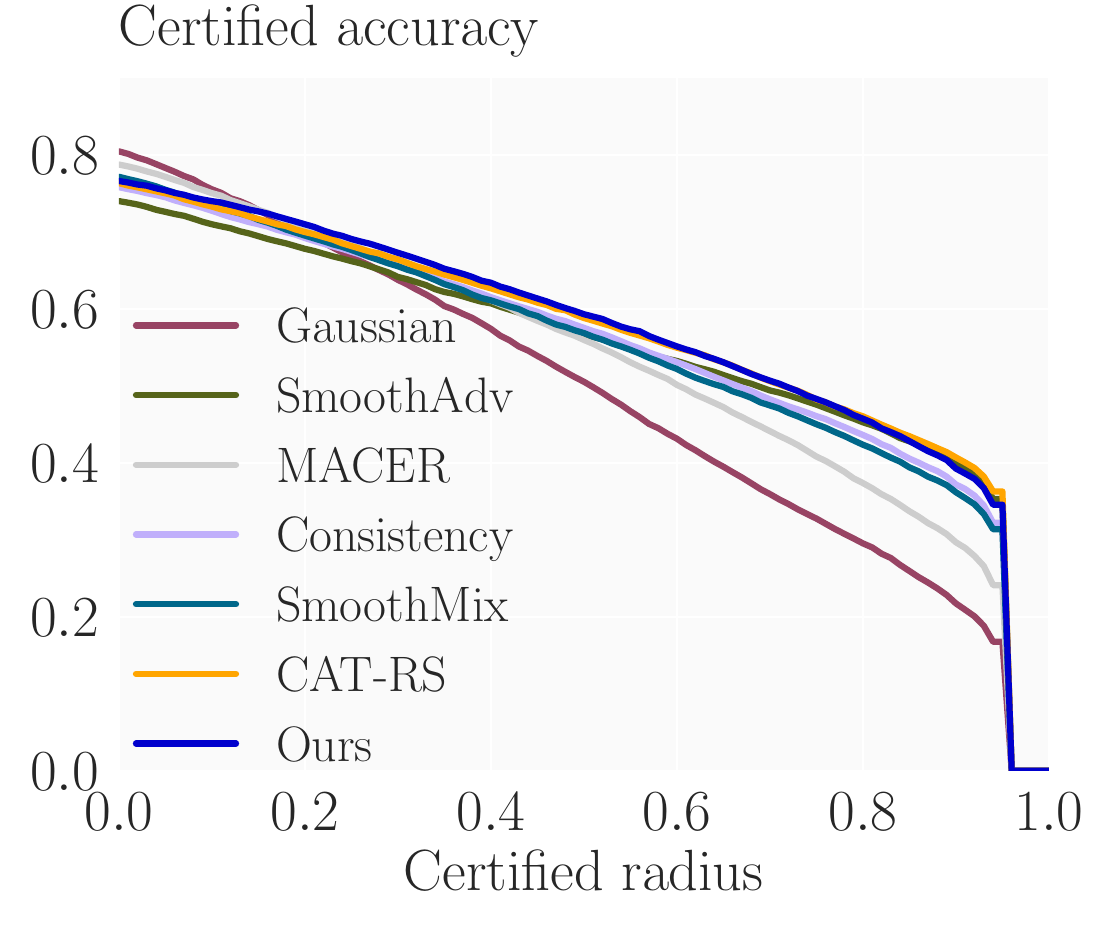}
        \vspace{-2mm}
        \caption{$\sigma=0.25$}
        \label{subfig:certify_cifar10_0.25}
    \end{subfigure}
    \begin{subfigure}[t]{.31\linewidth}
        \centering
        \includegraphics[width=.9\linewidth]{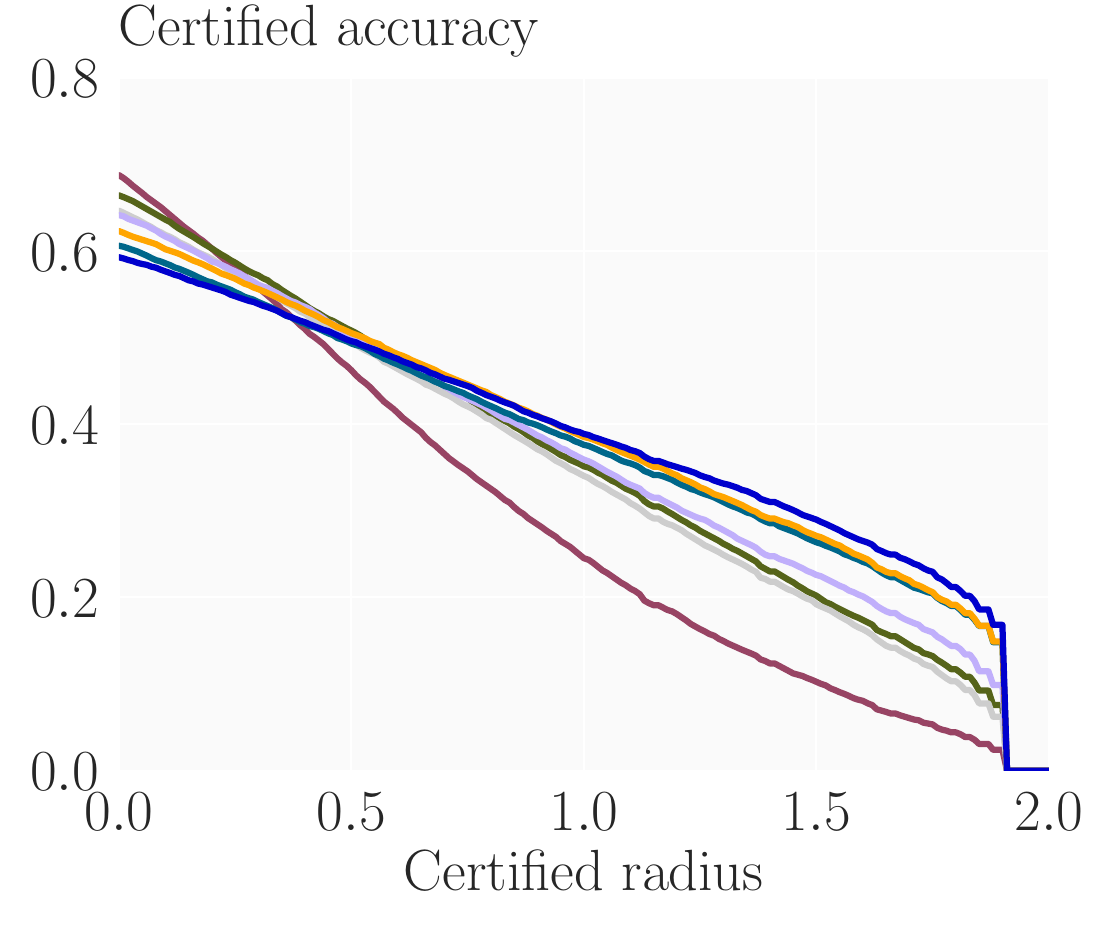}
        \vspace{-2mm}
        \caption{$\sigma=0.5$}
        \label{subfig:certify_cifar10_0.5}
    \end{subfigure}
    \begin{subfigure}[t]{.31\linewidth}
        \centering
        \includegraphics[width=.9\linewidth]{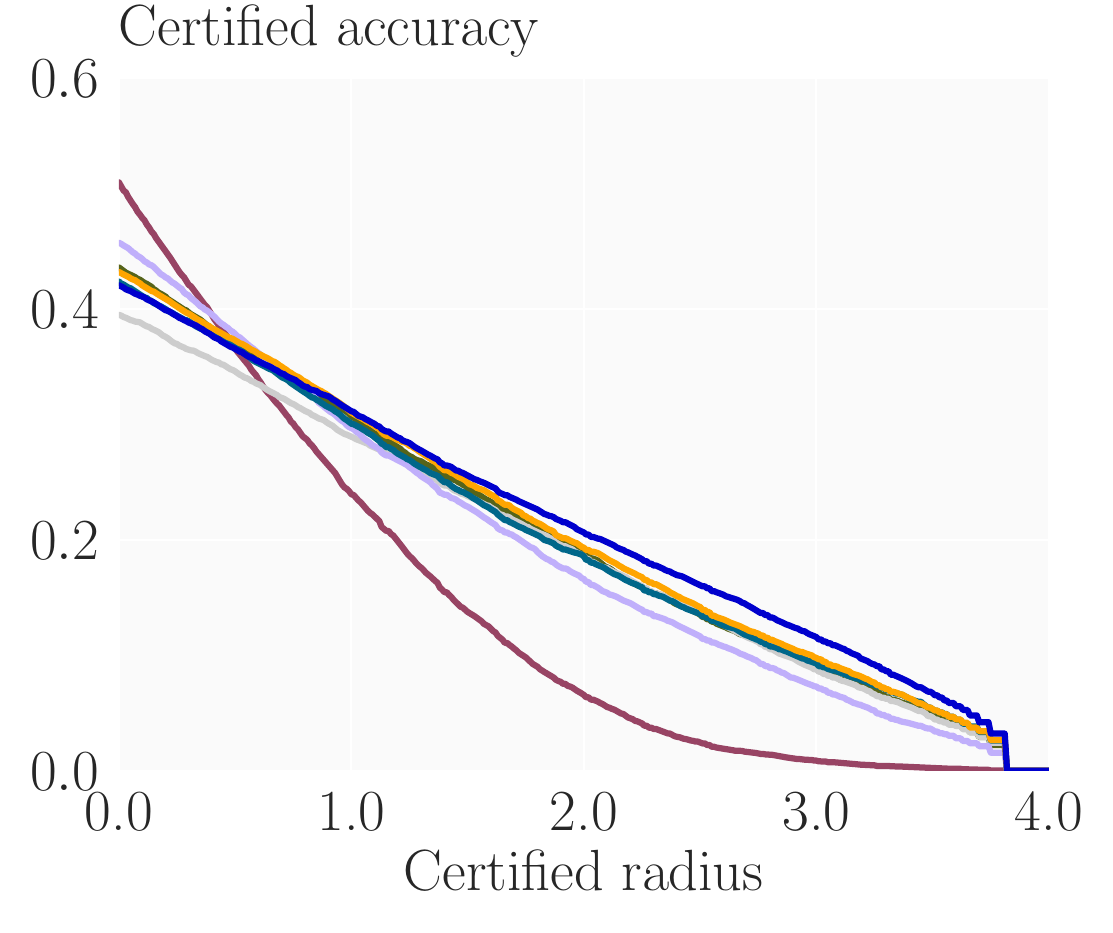}
        \vspace{-2mm}
        \caption{$\sigma=1.0$}
        \label{subfig:certify_cifar10_1.0}
    \end{subfigure}
    \vspace{-2mm}
    \caption{Certified radius-accuracy curve on CIFAR-10 for different methods.}
    \label{fig:certify_cifar10}
    \vspace{-4mm}
\end{figure*}

\begin{table*}[]
    \centering
    \begin{adjustbox}{width=.76\linewidth,center}
        \begin{tabular}{clc|ccccccccccc}
            \hline
            $\sigma$              & Methods       & ACR               & 0.00                               & 0.25                               & 0.50                   & 0.75             & 1.00             & 1.25             & 1.50             & 1.75             & 2.00             & 2.25             & 2.50             \\ \hline
            \multirow{8}{*}{0.25} & Gaussian      & 0.486             & \textbf{81.3}                      & 66.7                               & 50.0                   & 32.4             & 0.0              & 0.0              & 0.0              & 0.0              & 0.0              & 0.0              & 0.0              \\
                                  & MACER         & 0.529             & \textcolor{gray}{\underline{78.7}} & \underline{68.3}                   & 55.9                   & 40.8             & 0.0              & 0.0              & 0.0              & 0.0              & 0.0              & 0.0              & 0.0              \\
                                  & SmoothAdv     & 0.544             & \textcolor{gray}{73.4}             & \textcolor{gray}{65.6}             & 57.0                   & 47.5             & 0.0              & 0.0              & 0.0              & 0.0              & 0.0              & 0.0              & 0.0              \\
                                  & Consistency   & 0.547             & \textcolor{gray}{75.8}             & 67.4                               & 57.5                   & 46.0             & 0.0              & 0.0              & 0.0              & 0.0              & 0.0              & 0.0              & 0.0              \\
                                  & SmoothMix     & 0.543             & \textcolor{gray}{77.1}             & 67.6                               & 56.8                   & 45.0             & 0.0              & 0.0              & 0.0              & 0.0              & 0.0              & 0.0              & 0.0              \\
                                  & CAT-RS        & \underline{0.562} & \textcolor{gray}{76.3}             & 68.1                               & \underline{58.8}       & \underline{48.2} & 0.0              & 0.0              & 0.0              & 0.0              & 0.0              & 0.0              & 0.0              \\
                                  & \textbf{Ours} & \textbf{0.564}    & \textcolor{gray}{76.6}             & \textbf{69.1}                      & \textbf{59.3}          & \textbf{48.3}    & 0.0              & 0.0              & 0.0              & 0.0              & 0.0              & 0.0              & 0.0              \\ \hline
            \multirow{8}{*}{0.5}  & Gaussian      & 0.562             & \textbf{68.7}                      & 57.6                               & 45.7                   & 34.0             & 23.7             & 15.9             & 9.4              & 4.8              & 0.0              & 0.0              & 0.0              \\
                                  & MACER         & 0.680             & \textcolor{gray}{64.7}             & \textcolor{gray}{57.4}             & 49.5                   & 42.1             & 34.0             & 26.4             & 19.2             & 12.0             & 0.0              & 0.0              & 0.0              \\
                                  & SmoothAdv     & 0.684             & \textcolor{gray}{\underline{65.3}} & \textbf{57.8}                      & 49.9                   & 41.7             & 33.7             & 26.0             & 19.5             & 12.9             & 0.0              & 0.0              & 0.0              \\
                                  & Consistency   & 0.716             & \textcolor{gray}{64.1}             & \textcolor{gray}{\underline{57.6}} & \underline{50.3}       & 42.9             & 35.9             & 29.1             & 22.6             & 16.0             & 0.0              & 0.0              & 0.0              \\
                                  & SmoothMix     & 0.738             & \textcolor{gray}{60.6}             & \textcolor{gray}{55.2}             & 49.3                   & 43.3             & 37.6             & 32.1             & 26.4             & 20.5             & 0.0              & 0.0              & 0.0              \\
                                  & CAT-RS        & \underline{0.757} & \textcolor{gray}{62.3}             & \textcolor{gray}{56.8}             & \textbf{50.5}          & \textbf{44.6}    & \underline{38.5} & \underline{32.7} & \underline{27.1} & \underline{20.6} & 0.0              & 0.0              & 0.0              \\
                                  & \textbf{Ours} & \textbf{0.760}    & \textcolor{gray}{59.3}             & \textcolor{gray}{54.8}             & 49.6                   & \underline{44.4} & \textbf{38.9}    & \textbf{34.1}    & \textbf{29.0}    & \textbf{23.0}    & 0.0              & 0.0              & 0.0              \\ \hline
            \multirow{8}{*}{1.0}  & Gaussian      & 0.534             & \textbf{51.5}                      & \textbf{44.1}                      & 36.5                   & 29.4             & 23.8             & 18.2             & 13.1             & 9.2              & 6.0              & 3.8              & 2.3              \\
                                  & MACER         & 0.760             & \textcolor{gray}{39.5}             & \textcolor{gray}{36.9}             & \textcolor{gray}{34.6} & 31.7             & 28.9             & 26.4             & 23.8             & 21.1             & 18.6             & 16.0             & 13.8             \\
                                  & SmoothAdv     & 0.790             & \textcolor{gray}{43.7}             & \textcolor{gray}{40.3}             & 36.9                   & 33.8             & 30.5             & 27.0             & 24.0             & 21.4             & 18.4             & 15.9             & 13.4             \\
                                  & Consistency   & 0.757             & \textcolor{gray}{\underline{45.7}} & \textcolor{gray}{\underline{42.0}} & \textbf{37.8}          & 33.7             & 30.0             & 26.3             & 22.9             & 19.6             & 16.6             & 13.9             & 11.6             \\
                                  & SmoothMix     & 0.788             & \textcolor{gray}{42.4}             & \textcolor{gray}{39.4}             & 36.7                   & 33.4             & 30.0             & 26.8             & 23.9             & 20.8             & 18.6             & 15.9             & 13.6             \\
                                  & CAT-RS        & \underline{0.815} & \textcolor{gray}{43.2}             & \textcolor{gray}{40.2}             & \underline{37.2}       & \textbf{34.3}    & \underline{31.0} & \underline{28.1} & \underline{24.9} & \underline{22.0} & \underline{19.3} & \underline{16.8} & \underline{14.2} \\
                                  & \textbf{Ours} & \textbf{0.844}    & \textcolor{gray}{42.0}             & \textcolor{gray}{39.4}             & 36.5                   & \underline{33.9} & \textbf{31.1}    & \textbf{28.4}    & \textbf{25.6}    & \textbf{23.1}    & \textbf{20.6}    & \textbf{18.3}    & \textbf{16.1}    \\ \hline
        \end{tabular}
    \end{adjustbox}

    \caption{Comparison of certified test accuracy (\%) at different radii and ACR on CIFAR-10. The best and the second best results are highlighted in bold and underline, respectively; for certified accuracy, we highlight those that are worse than Gaussian training at the same radius in gray.}
    \label{tab:acr-cifar}
\end{table*}

\section{Experimental Evaluation} \label{sec:experiment}

We now evaluate our method proposed in \cref{sec:method} extensively.
Overall, simply focusing on easy samples is enough to replicate the advances in RS training strategies, sometimes even surpassing existing SOTA algorithms, \eg, on \cifar. Experiment details, including hyperparameters, are provided in \cref{app:training}.

\textbf{Baselines.} We compare our method to the following methods: Gaussian \citep{CohenRK19}, SmoothAdv \citep{SalmanLRZZBY19}, MACER \citep{ZhaiDHZGRH020}, Consistency \citep{JeongS20}, SmoothMix \citep{JeongPKLKS21}, and CAT-RS \citep{JeongKS23}. We always use the trained models provided by the authors if they are available and otherwise reproduce the results with the same setting as the original paper. For \cifar, we set $m=4$ for Gaussian training and our method, since this is the standard setting for SOTA methods \citep{JeongKS23}.

\begin{table*}[]
    \centering
    \begin{adjustbox}{width=.6\linewidth,center}
        \begin{tabular}{clc|cccccccc}
            \hline
            $\sigma$              & Methods       & ACR               & 0.0                               & 0.5                               & 1.0                   & 1.5             & 2.0             & 2.5             & 3.0             & 3.5          \\ \hline
            \multirow{3}{*}{0.25} & Gaussian      & 0.476             & \textbf{66.7}                      & 49.4                               &0.0                  &0.0            &0.0             &0.0             &0.0             &0.0                 \\
                                  & SmoothAdv     & \textbf{0.532}             & \textcolor{gray}{\underline{65.3}}             & \underline{55.5}             &0.0                  &0.0            &0.0             &0.0             &0.0             &0.0            \\
                                  & \textbf{Ours} & 0.529             & \textcolor{gray}{64.2}             & \textbf{55.8}                      &0.0         &0.0   &0.0             &0.0             &0.0             &0.0              \\ \hline
            \multirow{5}{*}{0.5}  & Gaussian      & 0.733             & \textbf{57.2}                      & 45.8                               & 37.2                   & 28.6             &0.0            &0.0            &0.0             &0.0                    \\
                                  & SmoothAdv     & 0.824             & \textcolor{gray}{53.6} & 49.4                      & 43.3                   & 36.8             &0.0            &0.0            &0.0            &0.0                   \\
                                  & Consistency   & 0.822             & \textcolor{gray}{55.0}             & \textbf{50.2} & \textbf{44.0}       & 34.8             &0.0            &0.0            &0.0            &0.0               \\
                                  & SmoothMix     & \textbf{0.846}             & \textcolor{gray}{54.6}             & \underline{50.0}             & 43.4                   & \textbf{37.8}             &0.0            &0.0            &0.0            &0.0           \\
                                  & \textbf{Ours} & 0.842    & \textcolor{gray}{\underline{55.5}}             & 48.6             & \underline{43.8}                   & \underline{37.6} &0.0   &0.0   &0.0   &0.0     \\ \hline
            \multirow{6}{*}{1.0}  & Gaussian      & 0.875             & \textbf{43.6}                      & \underline{37.8}                      & 32.6                   & 26.0             & 19.4             & 14.8             & 12.2             & 9.0                   \\
                                  & SmoothAdv     & 1.040             & \textcolor{gray}{40.3}             & \textcolor{gray}{37.0}             & \underline{34.0}                   & 30.0             & \underline{26.9}             & \textbf{24.6}             &19.7             & 15.2                 \\
                                  & Consistency   & 0.982             & \textcolor{gray}{41.6} & \textcolor{gray}{37.4} & 32.6          & 28.0             & 24.2             & 21.0             & 17.4             & 14.2                \\
                                  & SmoothMix     & 1.047             & \textcolor{gray}{39.6}             & \textcolor{gray}{37.2}             & 33.6                   & 30.4             & 26.2             & \underline{24.0}             & \textbf{20.4}             & \textbf{17.0}             \\
                                  & CAT-RS        & \textbf{1.071} & \textbf{43.6}             & \textbf{38.2}             & \textbf{35.2}      & \underline{30.8}    & 26.8 & \underline{24.0} & \underline{20.2} & \textbf{17.0}  \\
                                  & \textbf{Ours} & 1.042    & \textcolor{gray}{41.0}             & \textcolor{gray}{37.4}             & 33.6                   & \textbf{31.0} & \textbf{27.0}    & 23.2    & 19.4    & 15.8     \\ \hline
        \end{tabular}
    \end{adjustbox}

    \caption{Comparison of certified test accuracy (\%) at different radii and ACR on \IN. This table is structured in the same way as \cref{tab:acr-cifar}.}
    \label{tab:acr-in}
\end{table*}

\textbf{Main Result.} \cref{tab:acr-cifar} shows the ACR of different methods on \cifar.  We further include the results of independent runs in \cref{app:significance} to validate the significance of our results. Our method consistently outperforms all baselines on ACR, which confirms the effectiveness of our modification to Gaussian training in increasing ACR. Further, our method successfully increases the certified accuracy at large radius, as we explicitly focus on easy inputs which is implicitly taken by other SOTA methods.  \cref{fig:certify_cifar10} further visualizes certified accuracy at different radii, showing that at small radii, SOTA methods (including ours) are systematically worse than Gaussian training, while at large radii, these methods consistently outperform Gaussian training.

\cref{tab:acr-in} shows the results on \IN, which is organized in the same way as \cref{tab:acr-cifar}. Our method replicates a large portion of the ACR advances. Concretely, for $\sigma=0.25$, the proposed method recovers (0.529 - 0.476) / (0.532 - 0.476) = 94.6\% of the advance; for $\sigma=0.5$, it recovers (0.842 - 0.733) / (0.846 - 0.733) = 96.5\%; for $\sigma=1.0$, it recovers (1.042 - 0.875) / (1.071 - 0.875) = 85.2\%. This confirms that our results generalize across datasets.

The success of our simple and intuitive modification to Gaussian training suggests that existing approaches do not fully converge on strategies exploiting the weakness of ACR, and that the field should re-evaluate RS training strategies with better metrics instead of discarding them completely.
\begin{table*}[t]
    \centering
    \begin{adjustbox}{width=.78\linewidth,center}
        \begin{tabular}{c|ccc|c|ccccccccccc}
            \toprule[0.1em]
            $\sigma$              & discard                       & dataset weight & adverserial & ACR            & 0.00                   & 0.25                   & 0.50                   & 0.75 & 1.00 & 1.25 & 1.50 & 1.75 & 2.00 & 2.25 & 2.50 \\ \midrule[0.1em]
            \multirow{8}{*}{0.25} & \multicolumn{3}{c|}{Gaussian} & 0.486          & 81.3        & 66.7           & 50.0                   & 32.4                   & 0.0                    & 0.0  & 0.0  & 0.0  & 0.0  & 0.0  & 0.0                \\ \cmidrule[0.03em](lr){2-4} \cmidrule[0.03em](lr){5-5} \cmidrule[0.03em](lr){6-16}
                                  & \ding{52}                     &                &             & 0.515          & \textcolor{gray}{81.2} & 69.3                   & 53.7                   & 36.8 & 0.0  & 0.0  & 0.0  & 0.0  & 0.0  & 0.0  & 0.0  \\
                                  &                               & \ding{52}      &             & 0.512          & 81.3                   & 69.4                   & 53.3                   & 36.3 & 0.0  & 0.0  & 0.0  & 0.0  & 0.0  & 0.0  & 0.0  \\
                                  &                               &                & \ding{52}   & 0.537          & \textcolor{gray}{76.7} & \textcolor{gray}{66.7} & 55.6                   & 44.3 & 0.0  & 0.0  & 0.0  & 0.0  & 0.0  & 0.0  & 0.0  \\
                                  & \ding{52}                     & \ding{52}      &             & 0.523          & \textcolor{gray}{81.1} & 69.7                   & 54.6                   & 38.3 & 0.0  & 0.0  & 0.0  & 0.0  & 0.0  & 0.0  & 0.0  \\
                                  & \ding{52}                     &                & \ding{52}   & 0.550          & \textcolor{gray}{77.4} & 68.5                   & 57.7                   & 45.4 & 0.0  & 0.0  & 0.0  & 0.0  & 0.0  & 0.0  & 0.0  \\
                                  &                               & \ding{52}      & \ding{52}   & 0.554          & \textcolor{gray}{75.0} & \textcolor{gray}{67.1} & 58.1                   & 48.1 & 0.0  & 0.0  & 0.0  & 0.0  & 0.0  & 0.0  & 0.0  \\
                                  & \ding{52}                     & \ding{52}      & \ding{52}   & \textbf{0.564} & \textcolor{gray}{76.6} & 69.1                   & 59.3                   & 48.3 & 0.0  & 0.0  & 0.0  & 0.0  & 0.0  & 0.0  & 0.0  \\ \midrule[0.1em]
            \multirow{8}{*}{0.5}  & \multicolumn{3}{c|}{Gaussian} & 0.525          & 65.7        & 54.9           & 42.8                   & 32.5                   & 22.0                   & 14.1 & 8.3  & 3.9  & 0.0  & 0.0  & 0.0                \\ \cmidrule[0.03em](lr){2-4} \cmidrule[0.03em](lr){5-5} \cmidrule[0.03em](lr){6-16}
                                  & \ding{52}                     &                &             & 0.627          & 68.4                   & 59.4                   & 49.5                   & 39.4 & 29.0 & 20.5 & 13.0 & 7.0  & 0.0  & 0.0  & 0.0  \\
                                  &                               & \ding{52}      &             & 0.662          & 68.1                   & 59.7                   & 50.3                   & 41.1 & 31.7 & 23.7 & 16.2 & 9.2  & 0.0  & 0.0  & 0.0  \\
                                  &                               &                & \ding{52}   & 0.701          & \textcolor{gray}{63.4} & 56.2                   & 49.1                   & 41.7 & 34.5 & 28.2 & 22.1 & 16.5 & 0.0  & 0.0  & 0.0  \\
                                  & \ding{52}                     & \ding{52}      &             & 0.672          & 68.5                   & 60.1                   & 51.0                   & 41.8 & 32.4 & 24.2 & 16.7 & 9.3  & 0.0  & 0.0  & 0.0  \\
                                  & \ding{52}                     &                & \ding{52}   & 0.731          & \textcolor{gray}{63.4} & 56.8                   & 50.1                   & 43.7 & 37.0 & 30.8 & 24.4 & 18.0 & 0.0  & 0.0  & 0.0  \\
                                  &                               & \ding{52}      & \ding{52}   & 0.741          & \textcolor{gray}{56.1} & \textcolor{gray}{52.1} & 47.3                   & 43.2 & 38.6 & 34.1 & 29.1 & 23.1 & 0.0  & 0.0  & 0.0  \\
                                  & \ding{52}                     & \ding{52}      & \ding{52}   & \textbf{0.760} & \textcolor{gray}{59.3} & \textcolor{gray}{54.8} & 49.6                   & 44.4 & 38.9 & 34.1 & 29.0 & 23.0 & 0.0  & 0.0  & 0.0  \\ \midrule[0.1em]
            \multirow{8}{*}{1.0}  & \multicolumn{3}{c|}{Gaussian} & 0.534          & 51.5        & 44.1           & 36.5                   & 29.4                   & 23.8                   & 18.2 & 13.1 & 9.2  & 6.0  & 3.8  & 2.3                \\ \cmidrule[0.03em](lr){2-4} \cmidrule[0.03em](lr){5-5} \cmidrule[0.03em](lr){6-16}
                                  & \ding{52}                     &                &             & 0.665          & \textcolor{gray}{46.8} & \textcolor{gray}{42.1} & 37.6                   & 33.1 & 28.7 & 24.3 & 20.2 & 16.1 & 12.6 & 9.8  & 7.4  \\
                                  &                               & \ding{52}      &             & 0.695          & \textcolor{gray}{49.9} & 44.9                   & 39.7                   & 34.9 & 29.8 & 25.3 & 21.4 & 17.5 & 13.6 & 10.1 & 7.1  \\
                                  &                               &                & \ding{52}   & 0.690          & \textcolor{gray}{47.3} & \textcolor{gray}{42.0} & 37.0                   & 32.0 & 27.1 & 23.2 & 19.9 & 16.6 & 13.6 & 10.8 & 8.4  \\
                                  & \ding{52}                     & \ding{52}      &             & 0.736          & \textcolor{gray}{48.8} & 44.5                   & 40.3                   & 35.9 & 31.4 & 27.3 & 22.9 & 19.1 & 15.2 & 11.7 & 8.7  \\
                                  & \ding{52}                     &                & \ding{52}   & 0.771          & \textcolor{gray}{47.0} & \textcolor{gray}{43.1} & 39.1                   & 35.0 & 30.9 & 27.1 & 23.6 & 20.2 & 17.0 & 14.1 & 11.6 \\
                                  &                               & \ding{52}      & \ding{52}   & 0.818          & \textcolor{gray}{39.7} & \textcolor{gray}{37.5} & \textcolor{gray}{34.8} & 32.5 & 29.9 & 27.7 & 25.3 & 22.6 & 20.1 & 17.9 & 15.8 \\
                                  & \ding{52}                     & \ding{52}      & \ding{52}   & \textbf{0.844} & \textcolor{gray}{42.0} & \textcolor{gray}{39.4} & 36.5                   & 33.9 & 31.1 & 28.4 & 25.6 & 23.1 & 20.6 & 18.3 & 16.1 \\
            \bottomrule[0.1em]
        \end{tabular}
    \end{adjustbox}
    \caption{Ablation study on each component in our method.}
    \vspace{-4mm}
    \label{tab:ablation}
\end{table*}

\textbf{Ablation Study.} We present a thorough ablation study in \cref{tab:ablation}. When applied alone, all three components of our method improve ACR compared to Gaussian training. Combing two components arbitrarily improves the ACR compared to using only one component, and the best ACR is achieved when all three components are combined. This confirms that each component contributes to the improvement of ACR. In addition, they mostly improve the certified accuracy at large radii and reduce certified accuracy at small radii, which is consistent with our intuition that focusing on easy inputs can improve the ACR. More ablation on the hyperparameters is provided in \cref{app:more_ablation}.

\section{Beyond the ACR Metric} \label{sec:discuss}

This work rigorously shows that the weakness of the main evaluation metric ACR has introduced strong biases into the development of RS training strategies. Therefore, the field has to seek better alternative metrics to evaluate the robustness of models under RS. When the certification setting (budget $N$ and confidence $\alpha$) is fixed, we suggest using the best certified accuracy at various radii. Specifically, models certified with smaller $\sigma$  tend to achieve higher certified accuracy at smaller radii, whereas models certified with larger $\sigma$ usually have higher certified accuracy at larger radii. To extensively illustrate the power of the developed training strategies and avoid such tradeoff and dependence on $\sigma$, we suggest reporting the best certified accuracy across $\sigma$ at each radius. In other words, this represents the setting where one first fixes an interested radius, and then try to develop the best model with the highest certified accuracy at the pre-defined radius. Notably, several studies already adopt this strategy \citep{carlini2022certified, xiao2022densepure}, avoiding the weakness of ACR unintentionally. This evaluation is also consistent to the practice in deterministic certified training \citep{GowalDSBQUAMK18,MirmanGV18,ShiWZYH21,MullerE0V23,MaoMFV23,mao2024ctbench,MaoMFV24,PalmaBDKSL23,  balauca2024overcoming}. Furthermore, the community should also consider RS methods with $\sigma$ as a variable rather than a hyperparameter, thus effectively removing the dependence on $\sigma$. While there are some preliminary works in this direction \citep{alfarra2022data,wang2021pretrain}, they usually break the theoretical rigor \citep{sukenik2022intriguing}. Rencent works propose sound methods to certify with various $\sigma$ \cite{sukenik2022intriguing,jeong2024multi}. However, room for improvement remains, and further research is needed.

When the certification setting varies, which is common in applying RS to practice, metrics are required to reflect the power of RS training strategies under different certification budgets. To this end, we suggest  reporting the empirical distribution of $p_A$ for a large $N$, \eg $N=10^5$ as commonly adopted. This approximates the true distribution of $p_A$ nicely when $N$ is large, and a higher $p_A$ usually leads to a larger certified radius consistently under different certification budgets. To this end, as long as one model has larger $P(p_A \ge p)$ for every $p \in [0.5, 1]$ than another model, we can claim the former is better consistently in different certification settings. A case study is included in \cref{app:ecdf} to further demonstrate the usage and properties of the empirical distribution of $p_A$. 

While our modifications presented in \cref{sec:method} are not designed to improve robustness for the general data distribution, they show effectiveness in increasing certified accuracy at large radius. Other existing algorithms show similar effects. Therefore, it is important to note that the field has indeed made progress over the years. However, new metrics which consider robustness more uniformly should be developed to evaluate RS, and algorithms that outperform at every radius are encouraged. We hope this work can inspire future research in this direction.

\section{Conclusion}

This work rigorously demonstrates that Average Certified Radius (ACR) is a poor metric for Randomized Smoothing (RS). Theoretically, we prove that ACR of a trivial classifier can be arbitrarily large, and that an improvement on easy inputs contributes much more to ACR than the same amount of improvement on hard inputs. In addition, the comparison results based on ACR varies when the certification budget changes, thus unsuitable for comparing algorithms. Empirically, we show that all state-of-the-art (SOTA) strategies reduce the accuracy on hard inputs and only focus on easy inputs to increase ACR. Based on these novel insights, we develop strategies to amplify Gaussian training by focusing only on easy data points during training. Specifically, we discard hard inputs during training, reweigh the dataset with their contribution to ACR, and apply extreme optimization for easy inputs via adversarial noise selection. With these intuitive modifications to the simple Gaussian training, we replicate the effect of SOTA training algorithms and achieve the SOTA ACR. Overall, our results suggest the need for evaluating RS training with better metrics. To this end, we suggest reporting the empirical distribution of $p_A$ for a large $N$ in replacement of ACR. We hope this work will inspire future research in this direction and motivate the community to permanently abandon ACR in the evaluation of randomized smoothing.

\message{^^JLASTBODYPAGE \thepage^^J}

\section*{Reproducibility Statement}
Experimental details, including hyperparameters, datasets, and training procedures, are provided in \cref{app:experiment}. Result significance is confirmed in \cref{app:significance}. Code and models are available at \url{https://github.com/eth-sri/acr-weakness}.

\clearpage

\section*{Acknowledgements}

This work has been done as part of the EU grant ELSA (European Lighthouse on Secure and Safe AI, grant agreement no. 101070617) and the SERI grant SAFEAI (Certified Safe, Fair and Robust Artificial Intelligence, contract no. MB22.00088). Views and opinions expressed are however those of the authors only and do not necessarily reflect those of the European Union or European Commission. Neither the European Union nor the European Commission can be held responsible for them. 

The work has received funding from the Swiss State Secretariat for Education, Research and Innovation (SERI).

\section*{Impact Statement}
This paper presents work whose goal is to advance the field of Machine Learning. There are many potential societal consequences of our work, none of which we feel must be specifically highlighted here.

\bibliography{references}
\bibliographystyle{icml2025}

\message{^^JLASTREFERENCESPAGE \thepage^^J}

\newpage
\appendix
\section{Deferred Proofs}

\subsection{Proof of \cref{thm:trivial_acr}} \label{app:proof_trivial_acr}

Now we prove \cref{thm:trivial_acr}, restated here for convenience.

\trivialacr*

\begin{proof}
    Assume we are considering a $K$-class classification problem with a dataset containing $T$ samples. Let $c^*$ be the most likely class and $X^*$ be the set of all samples with label $c^*$; then there are at least $\ceil{T/K}$ samples in $X^*$ due to the pigeonhole theorem. We then show that a trivial classifier $f$ which always predicts $c^*$ can achieve an ACR greater than $M$ with confidence at least $1-\alpha$ with a proper budget $N$.

    Note that $p_A = 1$ for $x \in X^*$, and thus the certified radius of $x \in X^*$ is $R(x) = \sigma \Phi^{-1}(\alpha^{1/N})$. Therefore, $\text{ACR} = \frac{1}{T} \left[\sum_{x \in X^*} R(x) + \sum_{x \notin X^*} R(x)\right] \ge \frac{1}{T} \sum_{x \in X^*} R(x) \ge \frac{1}{K} \sigma \Phi^{-1}(\alpha^{1/N})$. Setting $N = \ceil{\frac{\log(\Phi(MK/\sigma))}{\log(\alpha)}}+1$, we have $\text{ACR} > M$.
\end{proof}

\subsection{Proof of \cref{thm:l2_norm_probability}} \label{app:proof_l2_norm_probability}

Now we prove \cref{thm:l2_norm_probability}, restated here for convenience.
\normprobability*
\begin{proof}
    Let $\bm{\delta}=[q_1, q_2, \dots, q_d] \in \sR ^d$ be sampled from $\gN(0, \sigma^2 I_d)$.
Then we have
\begin{align*}
    \prob(\bm{\delta}) &= \frac{1}{(2\pi)^{d/2} \sigma^d} \exp\left(-\frac{1}{2\sigma^2} \sum_{i=1}^d q_i^2\right) \\
    &= \frac{1}{(2\pi)^{d/2} \sigma^d} \exp\left(-\frac{1}{2\sigma^2} \|\bm{\delta}\|_2^2\right).
\end{align*}

This concludes the proof.
\end{proof}

\section{On the Empirical Distribution of $p_A$} \label{app:ecdf}

In this section, we include a case study to further demonstrate the idea we presented in \cref{sec:discuss} about using the empirical distribution of $p_A$ to evaluate the RS model. We first show how to calculate the certified accuracy for every interested radius and certification budget efficiently, given the ECDF of $p_A$. Then, we show how to convert an existing certified accuracy-radius curve to the ECDF of $p_A$, thus the literature results can be easily reused. These features facilitate practitioners to apply RS models in practice, as the certification budget usually varies in different scenarios. Finally, we re-evaluate previous RS training strategies with the ECDF of $p_A$.

\begin{figure}[]
    \centering
    \includegraphics[width=0.6\linewidth]{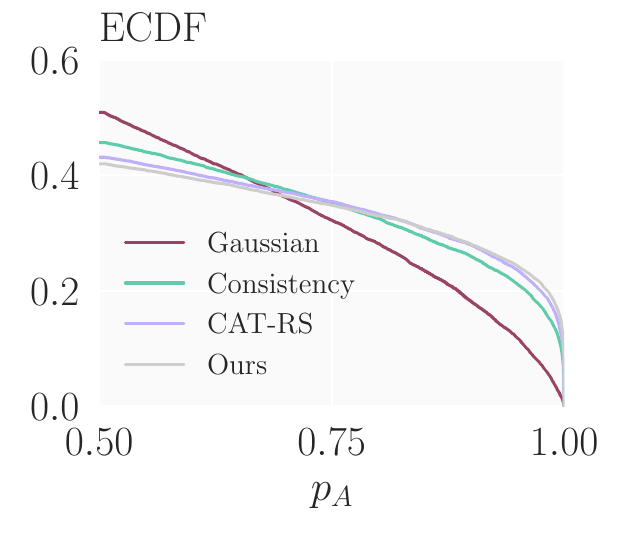}
    \vspace{-2mm}
    \caption{The empirical distribution of $p_A$ on CIFAR-10 with $\sigma=1.0$ and $N=10^5$. Note that this curve is independent of $\alpha$.}
    \label{fig:ecdf}
    \vspace{-4mm}
\end{figure}

\begin{figure}[]
    \centering
    \includegraphics[width=0.6\linewidth]{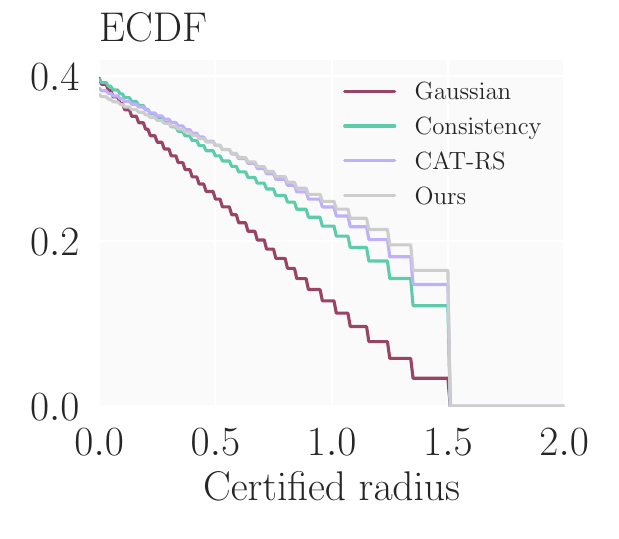}
    \vspace{-2mm}
    \caption{The certified accuracy-radius curve under $N=100$ and $\alpha=0.01$ on CIFAR-10 with $\sigma=1.0$, converted from \cref{fig:ecdf}.}
    \label{fig:ecdf2certified}
    \vspace{-4mm}
\end{figure}

\subsection{From ECDF to Certified Accuracy under Different Certification Settings} \label{app:ecdf2certified}

We consider four RS models (Gaussian, Consistency, CAT-RS and ours) trained on \cifar. The ECDF of $p_A$ under $N=10^5$ is shown in \cref{fig:ecdf}. Assume now instead of certifying with $N=10^5$ and $\alpha=0.001$, we want to certify with $N=100$ and $\alpha=0.01$, which is less expensive and more practical to run on edge devices. We are interested in the certified accuracy-radius curve under this new certification setting.

Since the ECDF of $p_A$ is calculated with a large $N$, we assume it equals the true population of $p_A$. When it is computed with a relatively small $N$, the difference between ECDF and the true population might be non-trivial, and corrections should be done in such cases. We leave this as future work, and focus on the case where the ECDF is accurate enough to represent the true population.

We take the case where $r=0.5$ to demonstrate the process. Given $r=0.5$, $N=100$ and $\alpha=0.01$, we run a binary search on $p_A$ to identify the minimum $p_A$ required to certify the radius. In this case, the minimum $p_A$ is $0.81$. Then, we look up the ECDF curve to find the corresponding $P(p_A \ge 0.81)$. For our method in \cref{fig:ecdf}, $P(p_A \ge 0.81) = 0.316$. Therefore, the expected certified accuracy under the new certification setting is $0.316$.
We note that this is an expected value, as multiple runs can lead to different certified accuracies due to the randomness of the noise samples.

Now, repeat this process for all radii of interest, and we can obtain the certified accuracy-radius curve under the new certification setting. The results are shown in \cref{fig:ecdf2certified}. We remark that no inference of the model is conducted in this process, thus the overhead is negligible.

\subsection{From Certified Radius to ECDF} \label{app:certified2ecdf}

Computing ECDF for a given model requires the same computation as computing the certified accuracy-radius curve, as the former is a middle step of the latter. Fortunately, given the certified accuracy-radius curve commonly presented in the literature with large $N$, we can also convert it to ECDF of $p_A$.

The idea is to revert the process in \cref{app:ecdf2certified}. Assume we are given the certified accuracy-radius curve when $N=10^5$ and $\alpha=0.001$. Given an interested $p_A$, we first calculate the maximum radius that can be certified with this $p_A$ under this certification setting. Then, we can look up the certified radius for this $p_A$ in the curve, and obtain the corresponding certified accuracy. This is the probability of $p_A$ being greater than the given $p_A$, ignoring the difference between ECDF and true population. Repeat this process for all interested $p_A$, and we can obtain the ECDF of $p_A$.

\subsection{Evaluating Current RS with ECDF}

For future reference, we evaluate all existing methods, including Gaussian, Consistency, SmoothAdv, SmoothMix, MACER, CAT-RS and ours, using the ECDF of $p_A$, together with our proposed algorithm. We use the same dataset and model as in \cref{sec:experiment}, i.e. ResNet-110 on \cifar with $\sigma \in \{0.25, 0.5, 1.0\}$ and $N=10^5$. Whenever an official public model is available, we use it to compute the ECDF of $p_A$. Otherwise, we re-train the model with the same hyperparameters as in the original paper.

The ECDF of $p_A$ is shown in \cref{fig:ecdf_eval}. The related data point is available in our GitHub repository. We note that the ECDF is independent of $\alpha$, thus it can be used to evaluate the certified accuracy under any certification setting.
Given the ECDF metric, model $\gA$ is better than model $\gB$ if and only if $\gA$ has higher ECDF than $\gB$ for all $p_A \ge 0.5$. Therefore, it is possible that neither of two models is better than the other. 

For various $\sigma$, the Gaussian model is always the best in the low $p_A$ region, i.e. from $p_A=0.5$ to around 0.65. Although the our model consistently has the highest ACR under all $\sigma$, it only outperforms other models in the high $p_A$ region. For example, with $\sigma=0.5$, our model is the worst when $p_A$ is below 0.75. We list some cases where two models have strict orders: with $\sigma=0.25$, the CAT-RS model is better than Consistency and SmoothAdv, and our model is better than CAT-RS; with $\sigma=0.5$, the CAT-RS model is better than the SmoothMix model; with $\sigma=1.0$, the CAT-RS model is better than SmoothMix and MACER, and our model is better than MACER.

\begin{figure*}[]
    \centering
    \begin{subfigure}[t]{.32\linewidth}
        \centering
        \includegraphics[width=.95\linewidth]{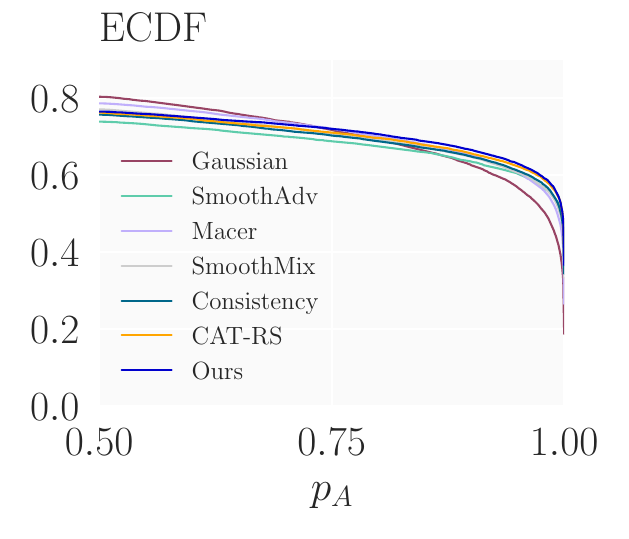}
        \vspace{-2mm}
        \caption{$\sigma=0.25$}
    \end{subfigure}
    \begin{subfigure}[t]{.32\linewidth}
        \centering
        \includegraphics[width=.95\linewidth]{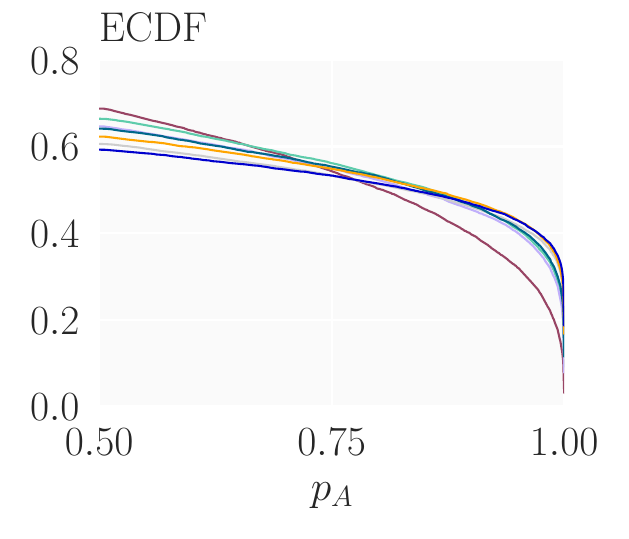}
        \vspace{-2mm}
        \caption{$\sigma=0.5$}
    \end{subfigure}
    \begin{subfigure}[t]{.32\linewidth}
        \centering
        \includegraphics[width=.95\linewidth]{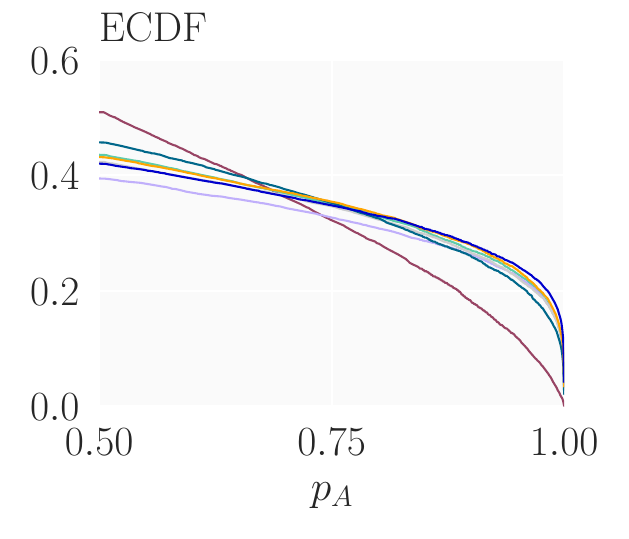}
        \vspace{-2mm}
        \caption{$\sigma=1.0$}
    \end{subfigure}
    \vspace{-2mm}
    \caption{The empirical distribution of $p_A$ on \cifar with different $\sigma$ and $N=10^5$. This figure is an extension of \cref{fig:ecdf} on more $\sigma$ and algorithms.}
    \label{fig:ecdf_eval}
\end{figure*}

\section{Experiment Details} \label{app:experiment}

\subsection{Training Details} \label{app:training}

We follow the standard training protocols in previous works. Specifically, we use  ResNet-110 \citep{he2016deep} on \cifar  and ResNet-50 on \IN, respectively. For \cifar, we train ResNet-110 for 150 epochs with an initial learning rate of 0.1, which is decreased by a factor of 10 in every 50 epochs. The inital learning rate of training ResNet-50 on \IN is 0.1 and it is decreased by a factor of 10 in every 30 epochs within 90 epochs. We investigate three noise levels, $\sigma=0.25, 0.5, 1.0$. We use SGD with momentum 0.9. The following hyperparameter is tuned for each setting:
\begin{align*}
    m &= \text{the number of noise samples for each input} \\
    E_t & = \text{the epoch to discard hard data samples} \\
    p_t  & = \text{the threshold of $p_A$ to discard hard data samples} \\
    T &= \text{the maximum number of steps of the PGD attack} \\
    \epsilon & = \text{the step size of the PGD attack}
\end{align*}
All other hyperparameter is fixed to the default values. Specifically, for \cifar we use 100 noise samples to calculate $p_A$ when discarding hard samples. When updating dataset weight, we use 16 noise samples to calculate $p_A$, with $p_{\rm min}=0.75$. The hyperparameter tuned on CIFAR-10 is shown in \cref{tab:hyperp-cifar}.
For \IN, we skip the `Data Reweighting with Certified Radius' step for faster training, and use a Gaussian-pretrained ResNet-50 to evaluate the $p_A$ with 50 noise samples. The hyperparameters tuned on \IN is shown in \cref{tab:hyperp-in}.

\begin{table}[]
    \centering
    \begin{adjustbox}{width=.4\linewidth,center}
        \begin{tabular}{c|cccc}
            \toprule
            $\sigma$       & 0.25    & 0.5     & 1.0  \\  \midrule   
            $m$            & 4       & 4       & 4    \\            
            $E_t$          & 60      & 70      & 60   \\             
            $p_t$          & 0.5     & 0.4     & 0.4  \\  
            $T$            & 3       & 6       & 4    \\  
            $\epsilon$     & 0.25    & 0.25    & 0.5  \\  
            \bottomrule
        \end{tabular}
    \end{adjustbox}

    \caption{Hyperparameters we use on CIFAR-10.}
    \label{tab:hyperp-cifar}
\end{table}

\begin{table}[t!]
    \centering
    \begin{adjustbox}{width=.4\linewidth,center}
        \begin{tabular}{c|cccc}
            \toprule
            $\sigma$       & 0.25    & 0.5     & 1.0  \\  \midrule       
            $m$            & 1       & 1       & 2 \\
            $E_t$          & 0       & 0       & 0    \\
            $p_t$          & 0.2     & 0.2     & 0.1  \\  
            $T$            & 2       & 2       & 1    \\  
            $\epsilon$     & 0.5    & 1.0    & 2.0  \\  
            \bottomrule
        \end{tabular}
    \end{adjustbox}
    \caption{Hyperparameters we used on \IN.}
    \label{tab:hyperp-in}
\end{table}

In \cref{alg:train}, we present a pseudocode for the overall training procedure described in \cref{sec:method}.

\begin{algorithm}[t]
    \caption{Overall Training Procedure}
    \label{alg:train}
    \begin{algorithmic}[0]
        \STATE \textbf{Input:} Train dataset $\mathcal{D}$, noise level $\sigma$, hyperparameters $E_{t}$, $p_{t}$, $m$, $T$, $\epsilon$
        \STATE Initialize the model $f$
        \FOR{epoch $=1$ to $N_{\rm epoch}$}
            \IF{epoch $<E_{t}$}
                \STATE Sample $\delta_1, \dots, \delta_m \sim \mathcal{N}(0,\sigma^2 I)$
                \STATE Perform Gaussian training with $\delta_1, \dots, \delta_m$
                \STATE continue
            \ELSIF{epoch $=E_{t}$}
                    \STATE Discard hard data samples in $\mathcal{D}$ with $p_A < p_{\rm t}$ to form $\mathcal{D}'$
            \ENDIF
                \IF{epoch $\% 10 = 0$}
                    \STATE update dataset weight according to \cref{sec:method_weight}
                \ENDIF
                \STATE Sample $|\mathcal{D}|$ data samples from $\mathcal{D}'$ with replacement to form the train set $\mathcal{D}''$
                \FOR{input $x$, label $c$ in $\mathcal{D}''$}
                    \STATE Sample $\delta_1, \dots, \delta_m \sim \mathcal{N}(0,\sigma^2 I)$
                    \FOR{i $=1$ to $m$}
                        \STATE $\delta_i^{*} \leftarrow$ \textsc{AdaptiveAdv}($f$, $x$, $c$, $\delta_i$, $T$, $\epsilon$)
                    \ENDFOR
                    \STATE Perform Gaussian training with $\delta_1^{*}, \dots, \delta_m^{*}$
                \ENDFOR
        \ENDFOR
    \end{algorithmic}
\end{algorithm}

\subsection{Certification Algorithm}
\label{app:certification}
We use the \textsc{Certify} function in \cite{CohenRK19} to calculate the certified radius, same as previous baselines, where $N=10^5$ and $\alpha=0.001$. 

\subsection{Adversarial Attack Algorithm} \label{app:attack}

In the adaptive attack, we implement the $l_2$ PGD and $l_{\infty}$ PGD attack. The difference between them lies in how the noise is updated based on the gradient. 
For $l_2$ PGD, the noise is updated as follows:
\begin{equation*}
    \delta^{*} = \delta + \epsilon \cdot \nabla_{\delta} L(f(x+\delta), y) / \|\nabla_{\delta} L(f(x+\delta), y)\|_2,
\end{equation*}
while for $l_{\infty}$ PGD, the noise is updated as follows:
\begin{equation*}
    \delta^{*} = \delta + \epsilon \cdot \text{sign}(\nabla_{\delta} L(f(x+\delta), y)),
\end{equation*}
where $L$ is the loss function, $f$ is the model, $x$ is the input, $y$ is the label, $\delta$ is the noise, $\delta^{*}$ is the updated noise, $\text{sign}$ is the sign function, and $\epsilon$ is the step size.
Unless stated otherwise, our experiments are always conducted with the $l_2$ PGD attack. We show the results with the $l_{\infty}$ PGD attack on CIFAR-10 in \cref{app:linf_results}.

\begin{figure}[t]
    \centering
    \begin{subfigure}[t]{.7\linewidth}
        \centering
        \includegraphics[width=\linewidth]{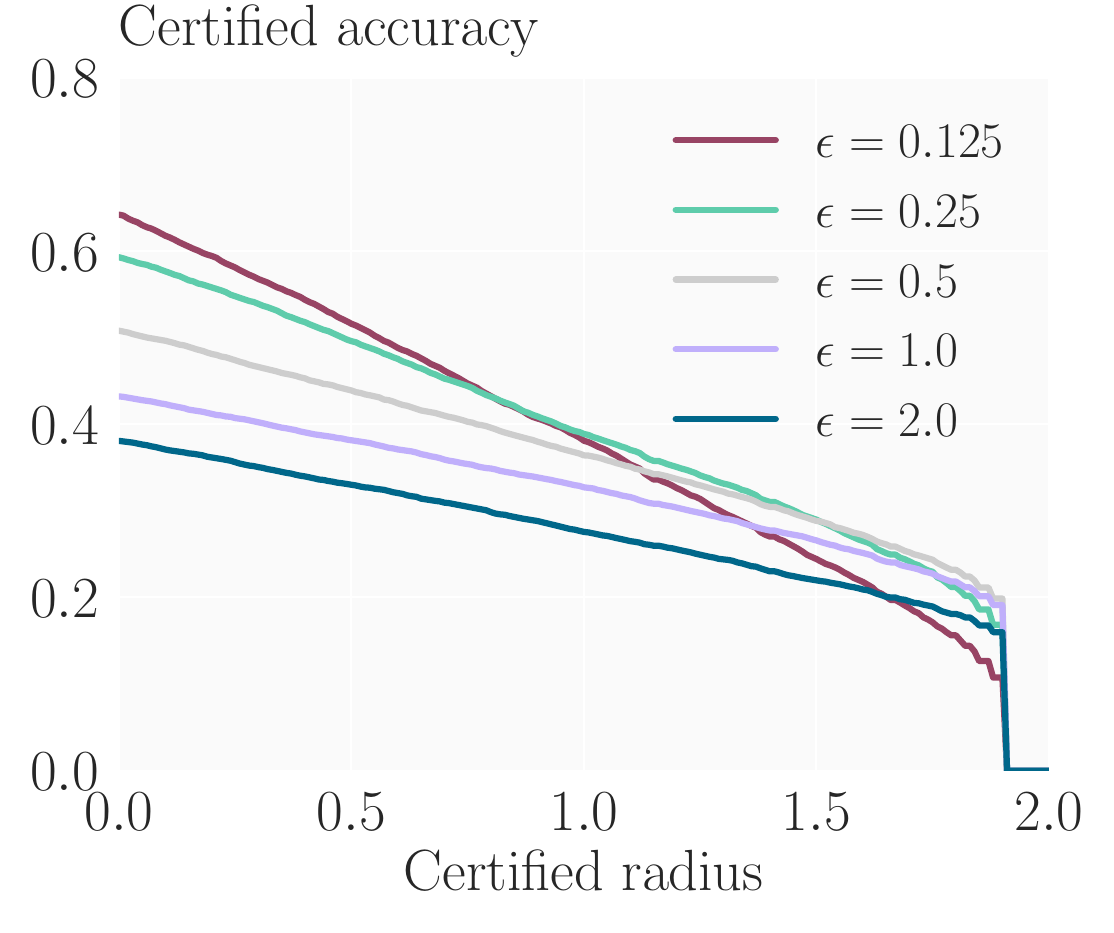}
        \vspace{-5mm}
        \caption{Effect of $\epsilon$}
        \label{subfig:ablation_eps}
    \end{subfigure}
    \begin{subfigure}[t]{.7\linewidth}
        \centering
        \includegraphics[width=\linewidth]{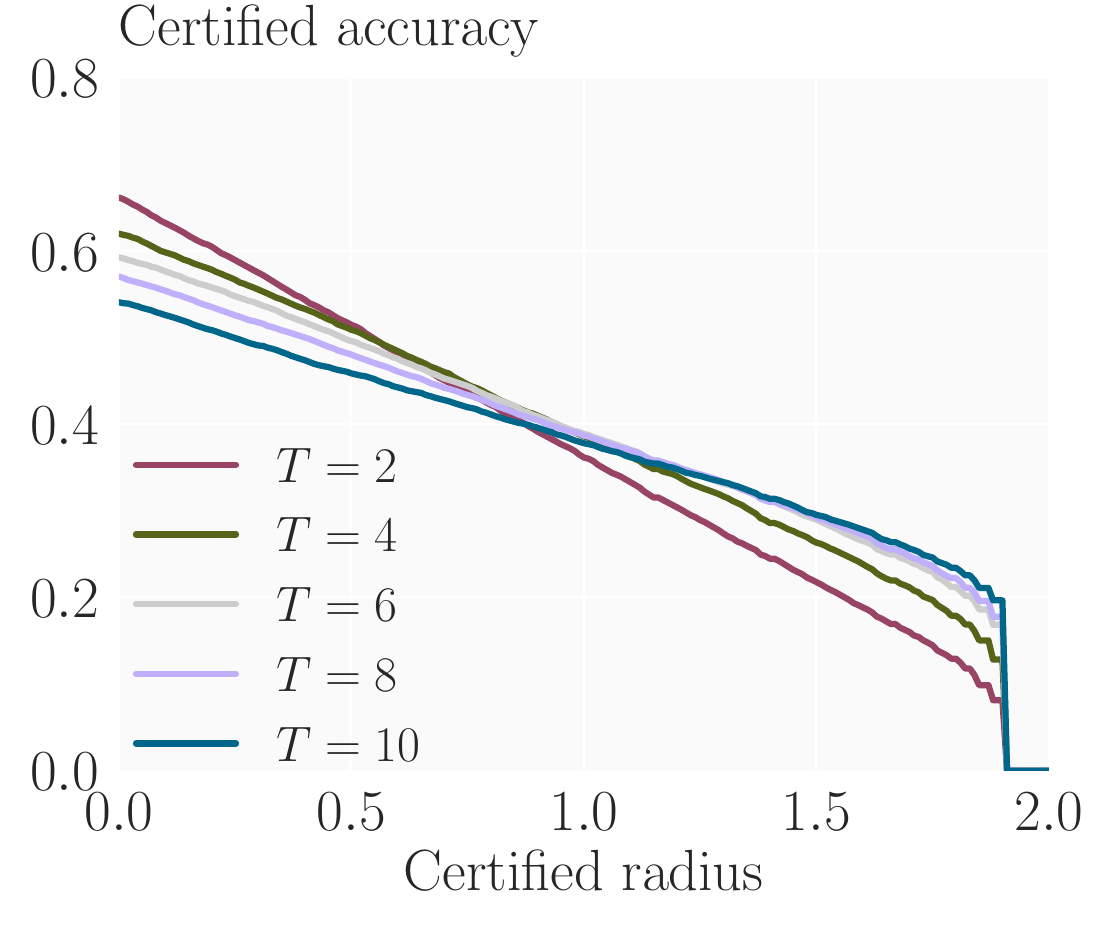}
        \vspace{-5mm}
        \caption{Effect of $T$}
        \label{subfig:ablation_t}
    \end{subfigure}
    \vspace{-2mm}
    \caption{Certified radius-accuracy curves on CIFAR-10 for different $\epsilon$ and $T$.}
    \label{fig:ablation_adv}
    \vspace{-4mm}
\end{figure}

\begin{figure*}[t]
    \centering
    \begin{subfigure}[t]{.33\linewidth}
        \centering
        \includegraphics[width=.9\linewidth]{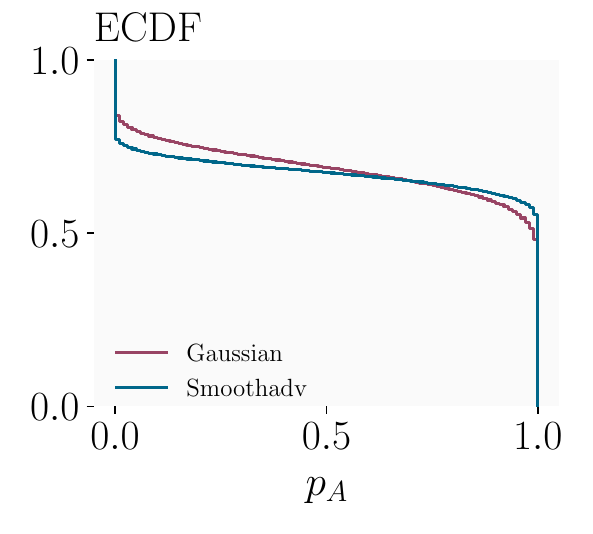}
        \vspace{-2mm}
        \caption{$\sigma=0.25$}
        \label{subfig:p_corr_0.25}
    \end{subfigure}
    \begin{subfigure}[t]{.33\linewidth}
        \centering
        \includegraphics[width=.9\linewidth]{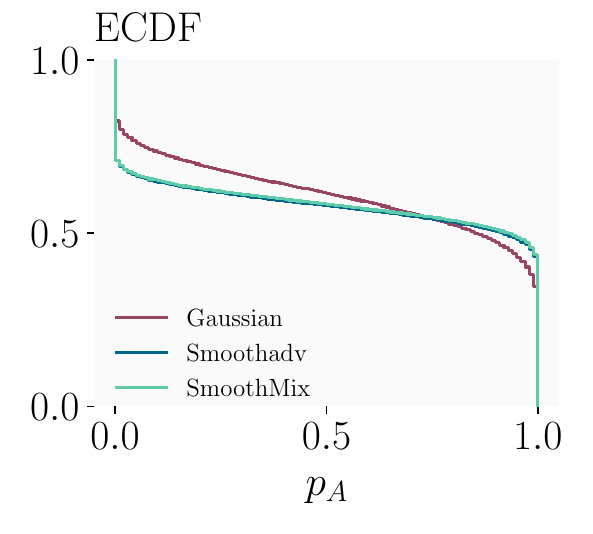}
        \vspace{-2mm}
        \caption{$\sigma=0.5$}
        \label{subfig:p_corr_0.5}
    \end{subfigure}
    \begin{subfigure}[t]{.33\linewidth}
        \centering
        \includegraphics[width=.9\linewidth]{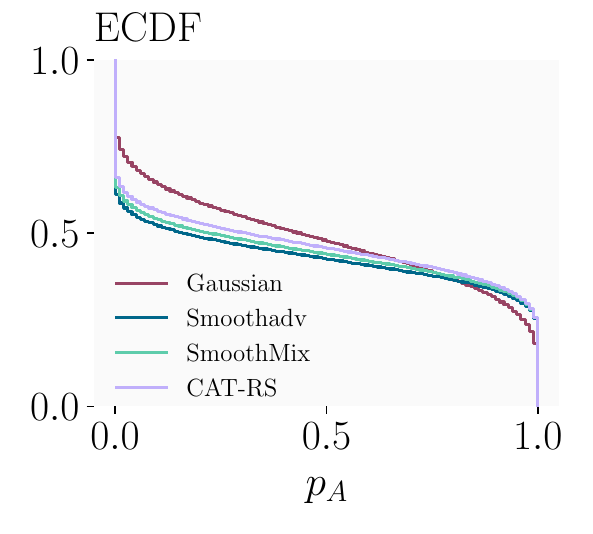}
        \vspace{-2mm}
        \caption{$\sigma=1.0$}
        \label{subfig:p_corr_1.0}
    \end{subfigure}
    \vspace{-2mm}
    \caption{The empirical cumulative distribution (ECDF) of $p_A$  on \IN for models trained and certified with various $\sigma$ with different training algorithms.}
    \label{fig:p_corr_in}
    \vspace{-1mm}
\end{figure*}

\begin{table*}[]
    \centering
    \begin{adjustbox}{width=\linewidth,center}
        \begin{tabular}{c|c|ccccccccccc}
            \toprule[0.1em]
            $\sigma$    & ACR             & 0.00         & 0.25         & 0.50         & 0.75         & 1.00 & 1.25 & 1.50 & 1.75 & 2.00 & 2.25 & 2.50 \\ \midrule[0.1em]
            0.25        & 0.5613$\pm$0.0022 & 76.43$\pm$0.11 & 68.58$\pm$0.34 & 58.93$\pm$0.34 & 48.15$\pm$0.10 & 0.0  & 0.0  & 0.0  & 0.0  & 0.0  & 0.0  & 0.0                \\ %
            0.5         & 0.7587$\pm$0.0011 & 59.00$\pm$0.20 & 54.40$\pm$0.26 & 49.37$\pm$0.16 & 44.31$\pm$0.14 & 39.09$\pm$0.19 & 34.14$\pm$0.05 & 28.91$\pm$0.05  & 22.83$\pm$0.13  & 0.0  & 0.0  & 0.0                \\ %
            1.0         & 0.8434$\pm$0.0003 & 41.74$\pm$0.43 & 39.23$\pm$0.44 & 36.46$\pm$0.31 & 33.73$\pm$0.33 & 30.95$\pm$0.27 & 28.30$\pm$0.08 & 25.62$\pm$0.09 & 23.22$\pm$0.15  & 20.74$\pm$0.15  & 18.40$\pm$0.25  & 16.12$\pm$0.19                \\ 
            \bottomrule[0.1em]
        \end{tabular}
    \end{adjustbox}
    \caption{Mean and Standard deviation of ACR and certified accuracy on \cifar using our method. The results are based on three independent runs with different random seeds.}
    \vspace{-4mm}
    \label{tab:error}
\end{table*}

\section{Additional Ablation Studies} \label{app:more_ablation}

In this section, we provide additional ablation studies on the effect of different hyperparameters. All results are based on $\sigma=0.5$ and CIFAR-10. Unless stated otherwise, all hyperparameters are the same as those in \cref{tab:hyperp-cifar}.

\subsection{Effect of Discarding Hard Data}

\begin{table}[t]
    \centering
    \begin{adjustbox}{width=.9\linewidth,center}
        \begin{tabular}{cc|cccccccc}
            \toprule
            $E_t$ & ACR   & 0.00  & 0.25  & 0.50 & 0.75 & 1.00 & 1.25 & 1.50 & 1.75 \\  \midrule
            50      & 0.738 & 58.0 & 53.2 & 48.3 & 42.9 & 38.1 & 33.0 & 27.5 & 21.8 \\ 
            70      & 0.760 & 59.3 & 54.8 & 49.6 & 44.4 & 38.9 & 34.1 & 29.0 & 23.0 \\  
            90      & 0.751 & 60.1 & 54.9 & 49.6 & 44.1 & 38.6 & 33.4 & 27.5 & 21.4 \\ 
            110     & 0.746 & 60.0 & 55.1 & 49.5 & 43.7 & 38.0 & 33.3 & 27.2 & 21.0 \\
            130     & 0.738 & 61.4 & 56.1 & 50.2 & 44.2 & 37.5 & 31.7 & 25.5 & 19.2 \\
            \bottomrule
        \end{tabular}
    \end{adjustbox}
    \caption{Effect of the discarding epoch $E_t$.}
    \label{tab:ablation_et}
\end{table}

\begin{table}[t]
    \centering
    \begin{adjustbox}{width=.9\linewidth,center}
        \begin{tabular}{cc|cccccccc}
            \toprule
            $p_t$ & ACR   & 0.00  & 0.25  & 0.50 & 0.75 & 1.00 & 1.25 & 1.50 & 1.75 \\  \midrule   
            0.3      & 0.753 & 59.5 & 54.5 & 49.4 & 44.3 & 39.0 & 33.7 & 28.0 & 21.9 \\             
            0.4      & 0.760 & 59.3 & 54.8 & 49.6 & 44.4 & 38.9 & 34.1 & 29.0 & 23.0 \\  
            0.5      & 0.757 & 60.0 & 54.7 & 49.9 & 44.5 & 38.9 & 33.7 & 27.9 & 22.1 \\  
            0.6      & 0.758 & 59.2 & 54.6 & 49.5 & 44.3 & 39.4 & 34.1 & 28.7 & 22.2 \\ 
            0.7      & 0.749 & 59.5 & 54.4 & 49.0 & 43.7 & 38.7 & 33.2 & 27.7 & 21.7 \\ 
            0.8      & 0.740 & 59.5 & 54.4 & 49.1 & 43.6 & 38.1 & 32.8 & 26.8 & 20.8 \\ 
            \bottomrule
        \end{tabular}
    \end{adjustbox}
    \caption{Effect of the discarding $p_A$ threshold $p_t$.}
    \label{tab:ablation_pt}
\end{table}

\begin{table}[t]
    \centering
    \begin{adjustbox}{width=.9\linewidth,center}
        \begin{tabular}{cc|cccccccc}
            \toprule
            $\epsilon$ & ACR   & 0.00  & 0.25  & 0.50 & 0.75 & 1.00 & 1.25 & 1.50 & 1.75 \\  \midrule           
            0.125      & 0.744 & 64.2 & 58.1 & 51.6 & 44.7 & 38.1 & 31.4 & 24.4 & 17.1 \\             
            0.250      & 0.760 & 59.3 & 54.8 & 49.6 & 44.4 & 38.9 & 34.1 & 29.0 & 23.0 \\  
            0.500      & 0.703 & 50.8 & 47.4 & 43.9 & 40.3 & 36.4 & 32.9 & 28.8 & 24.4 \\  
            1.000      & 0.624 & 43.2 & 40.7 & 38.2 & 35.4 & 32.7 & 29.7 & 26.6 & 22.7 \\  
            2.000      & 0.533 & 38.1 & 35.6 & 33.0 & 30.5 & 27.6 & 24.9 & 22.0 & 19.0 \\
            \bottomrule
        \end{tabular}
    \end{adjustbox}
    \caption{Effect of the adversarial attack step size $\epsilon$.}
    \label{tab:ablation_eps}
\end{table}

\begin{table}[t]
    \centering
    \begin{adjustbox}{width=.9\linewidth,center}
        \begin{tabular}{cc|cccccccc}
            \toprule
            $T$    & ACR   & 0.00  & 0.25  & 0.50 & 0.75 & 1.00 & 1.25 & 1.50 & 1.75 \\  \midrule           
            2      & 0.719 & 66.2 & 58.9 & 51.5 & 43.7 & 36.2 & 28.9 & 21.8 & 14.5 \\             
            4      & 0.752 & 62.0 & 56.7 & 50.9 & 44.6 & 38.6 & 32.7 & 26.3 & 19.7 \\  
            6      & 0.760 & 59.3 & 54.8 & 49.6 & 44.4 & 38.9 & 34.1 & 29.0 & 23.0 \\  
            8      & 0.748 & 57.0 & 52.6 & 48.0 & 43.4 & 38.7 & 34.2 & 29.2 & 23.6 \\  
            10     & 0.732 & 54.1 & 49.9 & 45.9 & 42.0 & 37.8 & 34.0 & 29.5 & 24.6 \\ 
            \bottomrule
        \end{tabular}
    \end{adjustbox}
    \caption{Effect of the number of steps of adversarial attack $T$.}
    \label{tab:ablation_t}
\end{table}

We investigate the effect of the two hyperparameters $E_t$ and $p_t$ in discarding hard samples. The results are shown in \cref{tab:ablation_et} and \cref{tab:ablation_pt}. 
If hard samples are discarded too early, it leads to a decrease in accuracy for all radii, as many potential easy samples are discarded before their $p_A$ reaches $p_t$. Conversely, discarding hard samples too late allows more samples to remain, resulting in improved clean accuracy but reduced performance on easy samples, as the training focuses less on them.

\subsection{Effect of Adaptive Attack}

For the adaptive attack, we evaluate the effect of the step size $\epsilon$ and the number of steps $T$. The results are shown in \cref{tab:ablation_eps} and \cref{tab:ablation_t}. In general, increasing the number of steps strengthens the attack, which helps find effective noise samples for extremely easy inputs. As a result, performance improves at larger radii, but relatively less attention is given to hard samples, leading to a decrease in clean accuracy. Similarly, increasing the step size in a moderate range ($\epsilon < 1.0$) has a similar effect. However, if the step size is too large, the attack loses effectiveness, resulting in decreased performance at all radii.

\section{Additional Results}

\subsection{Result Significance} \label{app:significance}

\cref{tab:error} shows the mean and standard deviation of the ACR and certified accuracy-radius curve on CIFAR-10 using our best hyperparameters reported in \cref{tab:hyperp-cifar}. The results are based on three independent runs with different random seeds. The standard deviation is relatively small, confirming the stability of our results.

\subsection{Empirical Distribution of $p_A$ on \IN} \label{app:ecdf_in}

We extend \cref{fig:p_corr} to \IN, shown as \cref{fig:p_corr_in}. We observe that for SOTA methods, there are more samples with $p_A$ below 0.5 compared to Gaussian training, which is consistent to the results on \cifar in \cref{sec:acr_empirical}.

\subsection{Discarded Data Ratio} \label{app:remain_ratio}

In this section, we are interested in how many data samples are discarded during training.
The remaining data ratios under different $\sigma$ on CIFAR-10 are $91 \%$, $80\%$ and $58 \%$, for $\sigma=0.25$, $0.5$ and $1.0$, respectively. The results indicate that even though a significant portion of the training set is discarded during training, we still achieve a better ACR than the current SOTA model according to \cref{tab:acr-cifar}. Further, this confirms that larger $\sigma$ leads to more discarded data samples, which is consistent with the intuition.

\subsection{Replacing $l_2$ with $l_{\infty}$ PGD Attack} \label{app:linf_results}

\begin{table}[t]
    \centering
    \begin{adjustbox}{width=.9\linewidth,center}
        \begin{tabular}{cc|cccccccc}
            \toprule
            $\sqrt{d} \cdot \epsilon$ & ACR   & 0.00  & 0.25  & 0.50 & 0.75 & 1.00 & 1.25 & 1.50 & 1.75 \\  \midrule           
            0.125      & 0.722 & 66.3 & 59.6 & 51.8 & 44.3 & 36.4 & 29.1 & 21.3 & 13.7 \\             
            0.250      & 0.746 & 62.7 & 57.0 & 51.1 & 44.8 & 38.4 & 32.0 & 25.4 & 18.5 \\  
            0.500      & 0.727 & 55.7 & 51.3 & 46.9 & 42.5 & 37.7 & 32.8 & 28.0 & 22.6 \\  
            1.000      & 0.640 & 46.4 & 43.3 & 39.9 & 36.7 & 33.5 & 30.0 & 25.9 & 22.0 \\ 
            \bottomrule
        \end{tabular}
    \end{adjustbox}
    \caption{ACR and certified accuracy under the $l_{\infty}$ PGD attack with $\sigma=0.5$ on CIFAR-10 with different step size.}
    \label{tab:linf_eps}
\end{table}

\begin{table}[t]
    \centering
    \begin{adjustbox}{width=.9\linewidth,center}
        \begin{tabular}{cc|cccccccc}
            \toprule
            $T$    & ACR   & 0.00  & 0.25  & 0.50 & 0.75 & 1.00 & 1.25 & 1.50 & 1.75 \\  \midrule           
            3      & 0.718 & 66.0 & 59.0 & 51.6 & 43.8 & 36.2 & 28.8 & 21.2 & 14.0 \\             
            4      & 0.731 & 65.0 & 58.5 & 51.5 & 44.2 & 37.3 & 29.9 & 23.1 & 15.8 \\  
            5      & 0.736 & 64.2 & 57.8 & 51.4 & 44.6 & 37.7 & 30.6 & 23.7 & 16.5 \\  
            6      & 0.746 & 62.7 & 57.0 & 51.1 & 44.8 & 38.4 & 32.0 & 25.4 & 18.5 \\  
            7      & 0.743 & 61.5 & 56.4 & 50.6 & 44.4 & 38.1 & 32.4 & 25.7 & 19.1 \\  
            8      & 0.742 & 60.1 & 54.8 & 49.7 & 43.7 & 38.0 & 32.7 & 26.9 & 20.3 \\ 
            \bottomrule
        \end{tabular}
    \end{adjustbox}
    \caption{ACR and certified accuracy under the $l_{\infty}$ PGD attack with $\sigma=0.5$ on CIFAR-10 with different number of steps.}
    \label{tab:linf_t}
    \vspace{90mm}
\end{table}

In this section, we replace the $L_2$ attack with $l_{\infty}$ PGD attack. The results are shown in \cref{tab:linf_eps} and \cref{tab:linf_t}. For $l_{\infty}$ attack, we use much smaller step sizes $\epsilon$ to have similar attack strengths to $l_2$ attack. Specifically, we investigate $\sqrt{d}\cdot \epsilon = 0.125, 0.25, 1.5, 1.0$ for $l_{\infty}$ attack, where $d$ is the dimension of the input. After the thorough experiments, we find that using $\sqrt{d}\cdot \epsilon = 0.25$ with other hyperparameters fixed to the values in \cref{tab:hyperp-cifar} has the best ACR.
The $l_{\infty}$ attack has a similar effect to the $l_2$ attack. Increasing the step size and the number of steps of the attack result in better accuracy at large radii but lower accuracy at small radii.

\end{document}